\documentclass{article}

\PassOptionsToPackage{numbers,compress}{natbib}
\usepackage[preprint]{neurips_2026}

\usepackage[utf8]{inputenc}
\usepackage[T1]{fontenc}
\usepackage{microtype}
\usepackage{hyperref}
\usepackage{url}
\usepackage{xcolor}

\usepackage{graphicx}
\usepackage{subcaption}
\usepackage{booktabs}
\usepackage{nicefrac}

\usepackage{amsmath}
\usepackage{amssymb}
\usepackage{amsfonts}
\usepackage{mathtools}
\usepackage{amsthm}
\usepackage{siunitx}

\usepackage{enumitem}
\usepackage[capitalize,noabbrev]{cleveref}

\theoremstyle{plain}
\newtheorem{theorem}{Theorem}[section]
\newtheorem{proposition}[theorem]{Proposition}

\newtheorem{corollary}[theorem]{Corollary}
\theoremstyle{definition}
\newtheorem{definition}[theorem]{Definition}

\theoremstyle{remark}
\newtheorem{remark}[theorem]{Remark}

\newcommand{\vaat}{VAA}

\newcommand{\cH}{\mathcal{H}}
\newcommand{\cO}{\mathcal{O}}
\newcommand{\cA}{\mathcal{A}}
\newcommand{\cD}{\mathcal{D}}

\newcommand{\hstar}{h^{\star}}

\title{On the Importance of Multistability \\for Horizon Generalization in Reinforcement Learning}

\author{%
  Asad Bakija \\
  Dept.\ of Electrical Engineering and Computer Science \\
  University of Liège \\
  Liège, Belgium \\
  \texttt{Asad.Bakija@uliege.be} \\
  \And
  Florent De Geeter \\
  University of Liège \\
  \And
  Julien Brandoit \\
  University of Liège \\
  \AND
  Pierre Sacré \\
  University of Liège \\
  \And
  Guillaume Drion \\
  University of Liège \\
  \texttt{gdrion@uliege.be} \\
}

\begin{document}

\maketitle
\begin{abstract}
In reinforcement learning (RL), agents acting in partially observable Markov decision processes (POMDPs) must rely on memory, typically encoded in a recurrent neural network (RNN), to integrate information from past observations. Long-horizon POMDPs, in which the relevant observation and the optimal action are separated by many time steps (called the horizon), are particularly challenging: training suffers from poor generalization, severe sample inefficiency, and prohibitive exploration costs.
Ideally, an agent trained on short horizons would retain optimal behavior at arbitrarily longer ones, but no formal framework currently characterizes when this is achievable. 
To fill this gap, we formalized \emph{temporal horizon generalization}, the property that a policy remains optimal for all horizons, derived a necessary and sufficient condition for it, and experimentally evaluated the ability of nonlinear and parallelizable RNN variants to achieve it.
This paper presents the resulting theoretical framework, the empirical evaluation, and the dynamical interpretation linking RNN behavior to temporal horizon generalization.
Our analyses reveal that multistability is necessary for temporal horizon generalization and, in simple tasks, sufficient; more complex tasks further require transient dynamics. In contrast, modern parallelizable architectures, namely state space models and gated linear RNNs, are monostable by construction and consequently fail to generalize across temporal horizons.
We conclude that multistability and transient dynamics are two essential and complementary dynamical regimes for horizon generalization, and that no current parallelizable RNN exhibits both.
Designing parallelizable architectures that combine these regimes thus emerges as a key direction for scalable long-horizon RL.
\end{abstract}




\section{Introduction}
Reinforcement learning (RL) typically requires collecting large amounts of data through interactions with the environments, making the learning process slow and time-consuming. Sparse-reward environments, in which rewards are only given when reaching a limited subset of states, aggravate the problem by making exploration even more complicated. Rather than directly training on difficult environments, generalization provides an alternative by allowing an agent trained on a simpler version to handle more complex variations. Not only does this approach allow to tackle tasks for which direct learning is untractable, but it also drastically reduces exploration cost and training duration.

However, generalization remains a fundamental challenge, as learned policies can easily struggle to generalize across variations of the training environment. A fundamental driver of overfitting is incomplete exploration of high-dimensional Markov decision processes (MDPs) \cite{korkmaz2024surveyanalyzinggeneralizationdeep}. This leads to non-robust policies, whose decisions can become unpredictable when facing unusual situations, even if the environment is fully-observable \cite{ghosh2021generalizationrldifficultepistemic}. This issue is further exacerbated in partially observable Markov decision processes (POMDPs) due to the 
inherent epistemic uncertainty. 

\citet{myers2025horizongeneralizationreinforcementlearning} introduced a property called \textit{horizon generalization}, defined as the ability to learn from short-horizon tasks and generalize to long-horizon tasks. They connected this property to planning invariance, where agents take the same decisions regardless of whether they are given a distant goal or an intermediate waypoint. In this work, we extend this idea to long-horizon POMDPs, in which the relevant observation and the optimal action are separated by many time steps called the horizon $T$, but the optimal action is independent of this horizon. In this context, we introduce a property called \emph{temporal horizon generalization} (THG), the ability of a policy trained on an horizon $T$ to remain optimal for all horizons $T \geq 0$. We also connect this property with specific dynamical regimes of recurrent neural networks (RNNs), that are the typical architecture used to tackle POMDPs (a more detailed related work is provided in \cref{appendix-related-work}). Our main contributions are as follows.

\textbf{Definition and condition for THG.} We provide a general definition of THG, as well as a necessary and sufficient condition to achieve it in arbitrary partially observable RL settings. We further show that multistability, the coexistence of multiple stable equilibria in intrinsic memory dynamics, is critical to achieve THG. 

\textbf{Monostability and multistability in classical and parallelizable RNNs.} We connect this multistability property with the dynamics of RNNs, showing that classical nonlinear RNNs can reach multistability but not state-of-the-art parallelizable RNNs such as state-space models (SSMs) \cite{gu2021combiningrecurrentconvolutionalcontinuoustime,gu2022efficientlymodelinglongsequences, gu2024mambalineartimesequencemodeling}  and linear gated RNNs \cite{martin2018parallelizinglinearrecurrentneural,qin2023hierarchicallygatedrecurrentneural,beck2024xlstmextendedlongshortterm, feng2024weregrudminimalgatedrecurrent, zhu2025scalablematmulfreelanguagemodeling}. 

\textbf{THG in POMDPs.} We validate our theoretical findings experimentally on two POMDP environments, showing that nonlinear RNNs exploit multistability to reach THG, and that SSMs and linear gated RNNs indeed lack THG capacity. We further show that THG-capable models trained on simple tasks are better at solving more complex tasks than models directly trained on the more complex one. We then isolate the role of multistability in THG by using the bistable memory recurrent units (BMRU) \cite{degeeter2026parallelizable}, a recently proposed parallelizable model, that is bistable while lacking transient dynamics.

\textbf{Transient dynamics vs multistability in THG.} Finally, we highlight the essential synergy between transient dynamics and multistability, particularly in environments defined by a mix of sparse, critical cues and frequently updated observations. 

As parallelizable RNN architectures, specifically SSMs and linear gated RNNs, are increasingly becoming the standard for scalability, our research highlights the limitations of uniform architectures in RL. \textit{We advocate for the use of hybrid parallelizable frameworks that effectively leverage the benefits of transient dynamics alongside multistability.} 

\section{Multistability and temporal horizon generalization}\label{section-THG}

Proofs for all theorems and propositions can be found in \cref{appendix-THG}.

\subsection{Definition of temporal horizon generalization}
We consider a sparse, partially observable reinforcement learning problem. An agent acts in an environment with observation space $\cO$ and
action space $\cA$.
At each time step $t$, the agent updates an internal (hidden) state
$h_t \in \cH$ according to
\begin{equation}
  h_t = f(h_{t-1},\, o_t,\, a_{t-1}),
  \label{eq:update}
\end{equation}
where $o_t \in \cO$ is the current observation and $a_{t-1} \in \cA$
is the previous action.
The agent then selects $a_t = \pi(h_t)$, where $\pi$ is the policy.

\begin{figure*}[t]
    \centering
    \includegraphics[width=\textwidth]{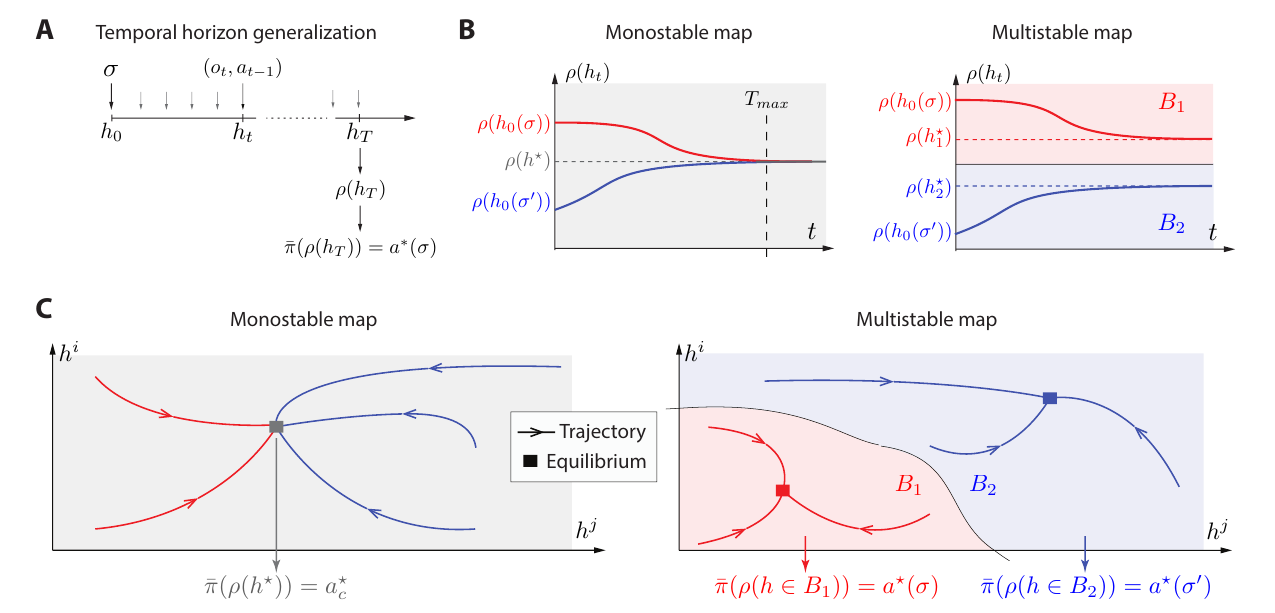}
    \caption{\textbf{Temporal horizon generalization.} \textbf{A.} Schematic definition of temporal horizon generalization. \textbf{B.} Sketch of time evolutions for a monostable map (left) and a multistable map (right). The red and blue color depicts the basin of attractions of each fixed point in the multistable case. \textbf{C.} Sketch of trajectories for a monostable map (left) and a multistable map (right) in the phase plane. The red and blue color depicts the basin of attractions of each fixed point in the multistable case.}
    \label{fig:THG}
\end{figure*}

Suppose the environment presents a key observation $\sigma \in \cO$ at time $t = 0$, which is encoded in the state $h_0(\sigma)$. It then provides observations $o_t \in \cO$ uninformative about $\sigma$ paired with actions $a_t \in \cA$ during the next $T-1$  steps, called the idle phase. At step $T$, called the horizon, the agent must take the optimal action $a^*(\sigma)\in\cA$, which depends on the value of $\sigma$.

\begin{definition}[Temporal horizon generalization]
An agent is said to \emph{generalize over temporal horizon} if the policy is optimal for all horizons $T\geq0$. That is, a same policy solves the task regardless of the length of the idle phase (\cref{fig:THG}A). 
\end{definition}

\subsection{The simpler case of constant idle dynamics}

\paragraph{Simplified dynamics during the idle phase.}
We first define a simplified idle phase ($t = 1, \ldots, T$) with constant observation $o_t=o_c$ and constant actions $a_{t-1}=a_c$. The update reduces to a fixed map on $\cH$:
\begin{equation}
  U(h) \;\coloneqq\; f(h,\, o_c,\, a_c).
  \label{eq:U}
\end{equation}
We write $U^T$ for the $T$-fold composition of $U$ with itself.
At the decision step $T$, the internal state is $h_{T} = U^{T}(h_0(\sigma))$.

\textbf{Condition for temporal horizon generalization}
We introduce a \emph{read-out function}
$\rho : \cH \to \cD$, where $\cD$ is the space of decision-relevant representations. This read-out function extracts the relevant information to solve the task from $h_t$ at any time.
The policy decomposes as $\pi(h_t) = \bar{\pi}(\rho(h_t))$.

\begin{definition}[Compatible read-out]
  \label{def:compatible}
  A function $\rho : \cH \to \cD$ is \emph{compatible} with the
  dynamics $U$ if it is an invariant of $U$, \textit{i.e.}, 
  \begin{equation}
    \rho \circ U = \rho,
    \label{eq:invariance}
  \end{equation}
  or equivalently $\rho(U(h)) = \rho(h)$ for all reachable
  $h \in \cH$.
  It is \emph{separating} for $\sigma$ if, for every pair
  $\sigma \neq \sigma'$,
  \[
    \rho\!\left(h_0(\sigma)\right) \;\neq\; \rho\!\left(h_0(\sigma')\right).
  \]
\end{definition}

\begin{theorem}[Necessary and sufficient condition for horizon generalization]
  \label{thm:main}
  Assume a policy $\pi = \bar{\pi} \circ \rho$ that is optimal. The policy generalizes over temporal horizon, $T \geq 1$, if and
  only if $\rho$ is a compatible and separating read-out in the sense
  of Definition~\ref{def:compatible}.
\end{theorem}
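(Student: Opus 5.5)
The plan is to prove the two directions separately. Both directions reduce to one identity: the internal state at the decision step is $h_T = U^T(h_0(\sigma))$, so the chosen action is $\pi(h_T) = \bar{\pi}\bigl(\rho(U^T(h_0(\sigma)))\bigr)$. Throughout, I read ``optimal'' as optimal at the training horizon, and I use the base case $T=0$ (or any one fixed horizon) to anchor $\bar{\pi}(\rho(h_0(\sigma))) = a^*(\sigma)$. The reachable set is $\{U^T(h_0(\sigma)) : T\ge 0,\ \sigma\}$, and invariance in Definition~\ref{def:compatible} is only required there.

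\emph{Sufficiency.} Assume $\rho$ is compatible and separating. I will prove by induction on $T$ that $\rho(U^T(h_0(\sigma))) = \rho(h_0(\sigma))$. The inductive step applies \eqref{eq:invariance} to the reachable state $U^{T-1}(h_0(\sigma))$. It follows that $\pi(h_T) = \bar{\pi}(\rho(h_0(\sigma)))$ for every $T\ge1$. This is the action the optimal policy takes at the anchor horizon, namely $a^*(\sigma)$. Since $a^*(\sigma)$ does not depend on $T$, the policy is optimal for all $T$. Separation enters only to guarantee that this common value can encode $a^*(\sigma)$: distinct keys yield distinct points of $\cD$, so $\bar{\pi}$ is well defined as a map sending $\rho(h_0(\sigma))$ to $a^*(\sigma)$.

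\emph{Necessity.} I will make explicit how the paper uses $\rho$: it ``extracts the relevant information to solve the task'', meaning $\cD$ carries no decision-irrelevant content. I formalize this as $\bar{\pi}$ being injective on $\rho$ of the reachable set. I also assume the task genuinely distinguishes keys, i.e.\ $a^*(\sigma)\neq a^*(\sigma')$ for $\sigma\neq\sigma'$; otherwise I identify such keys. Now suppose the policy is optimal for all horizons. Then $\bar{\pi}(\rho(U^{T}(h_0(\sigma)))) = a^*(\sigma) = \bar{\pi}(\rho(U^{T+1}(h_0(\sigma))))$ for every $T$. Injectivity of $\bar{\pi}$ gives $\rho(U(h)) = \rho(h)$ at every reachable $h = U^T(h_0(\sigma))$, which is exactly compatibility on reachable states. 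For separation, if $\rho(h_0(\sigma)) = \rho(h_0(\sigma'))$, then applying $\bar{\pi}$ gives $a^*(\sigma) = a^*(\sigma')$, and hence $\sigma=\sigma'$.

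The main obstacle is necessity. Without the minimality reading of $\rho$, a given $\rho$ could carry extra, time-varying components that $\bar{\pi}$ ignores. Such a $\rho$ would fail \eqref{eq:invariance} even though the policy generalizes. I would therefore state the injectivity of $\bar{\pi}$ (equivalently, that $\rho$ is the decision-relevant quotient) as the precise content of ``decision-relevant'' and invoke it at exactly that step. The rest is induction and bookkeeping on the reachable set.
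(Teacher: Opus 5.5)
Your proposal is correct and follows the paper's proof essentially step for step: sufficiency by induction on $T$ from $\rho\circ U=\rho$ on the reachable set, necessity by comparing the actions at horizons $T$ and $T+1$, and separation from the fact that distinct keys require distinct optimal actions. The one difference is that you state explicitly the hypothesis the paper uses tacitly when it writes ``since $\bar{\pi}$ must distinguish different $\sigma$, we need $\rho(U(h))=\rho(h)$''. That hypothesis is that $\bar{\pi}$ is injective on $\rho$ of the reachable set, with $a^*(\sigma)\neq a^*(\sigma')$ for $\sigma\neq\sigma'$. You are right that otherwise a $\rho$ with decision-irrelevant, time-varying components breaks necessity, so your version makes the paper's argument rigorous rather than changing it.
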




\begin{remark}
  Condition~\eqref{eq:invariance} needs to hold only on states that
  are \emph{reachable} after encoding some $\sigma$, \textit{i.e.},  on the set
  $\{U^n(h_0(\sigma)) : \sigma \in \cO,\; n \geq 0\}$.
  It is not required for every $h \in \cH$.
\end{remark}

\subsubsection{Monostability precludes temporal horizon generalization}

\begin{definition}[Monostable map]
   The dynamics $U : \cH \to \cH$ is \emph{monostable} (or globally attracting) if
  it has a unique globally attracting fixed point $\hstar \in \cH$:
  \[
    U^T(h) \;\longrightarrow\; \hstar \quad \text{as } T \to \infty,
    \quad \forall\, h \in \cH.
  \]
\end{definition}

\begin{proposition}[Monostability precludes temporal horizon generalization]
  \label{prop:mono}
  Let $U$ be monostable with attractor $\hstar$.
  Let $\rho : \cH \to \cD$ be any read-out function that is
  compatible with $U$ in the sense of~\eqref{eq:invariance}.
  Then $\rho$ is constant on the basin of attraction of $\hstar$,
  \textit{i.e.},  $\rho(h) = \rho(\hstar)$ for all $h \in \cH$.
  In particular, $\rho$ cannot be separating for any two distinct
  observations $\sigma \neq \sigma'$ (\cref{fig:THG}B, left).
\end{proposition}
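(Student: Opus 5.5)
The plan is to combine the invariance $\rho\circ U=\rho$ with the global convergence $U^T(h)\to\hstar$. Iterating the compatibility condition~\eqref{eq:invariance} by induction on $T$ gives $\rho(U^T(h))=\rho(h)$ for every $T\geq 0$ and every (reachable) $h$. So the sequence $T\mapsto\rho(U^T(h))$ is constant and equal to $\rho(h)$. On the other hand, $U^T(h)\to\hstar$. If $\rho$ is continuous at $\hstar$, then $\rho(U^T(h))\to\rho(\hstar)$. Since a constant sequence converges to its own value, we get $\rho(h)=\rho(\hstar)$.

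Separation then fails immediately. For $\sigma\neq\sigma'$, both $h_0(\sigma)$ and $h_0(\sigma')$ lie in the basin, which is all of $\cH$. Hence $\rho(h_0(\sigma))=\rho(\hstar)=\rho(h_0(\sigma'))$, contradicting the separating property in Definition~\ref{def:compatible}. Together with Theorem~\ref{thm:main}, this shows that no optimal policy $\pi=\bar\pi\circ\rho$ can generalize over temporal horizon under monostable idle dynamics.

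The main obstacle is the passage to the limit. Invariance together with convergence of states does not force $\rho$ to be constant without some regularity. For example, $\rho(h)=\lfloor\log\|h-\hstar\|/\log\lambda\rfloor \bmod 2$ under the contraction $U(h)=\hstar+\lambda(h-\hstar)$ is an invariant that separates points. I would therefore make explicit the standing assumption that $\rho$ is continuous, at least at $\hstar$, with $\cD$ Hausdorff so that limits are unique. This is natural because $\rho$ is realized by a neural read-out.

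If continuity must be weakened, I would try a different argument. Since $\cD$ is meant to be separated by a decision map $\bar\pi$, one can replace exact separation by robust separation: the images $\rho(h_0(\sigma))$ must lie at positive distance in $\cD$. Continuity at $\hstar$ then forces every reachable orbit's read-out into a common neighbourhood of $\rho(\hstar)$, which contradicts robust separation. Either way, the key input beyond invariance is continuity of $\rho$ at the attractor, and I would state it as a hypothesis rather than try to derive it.
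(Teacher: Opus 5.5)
Your core argument is the paper's: iterate $\rho\circ U=\rho$ to get $\rho(h)=\rho(U^T(h))$ for all $T$, then pass to the limit using continuity of $\rho$ at $\hstar$. The two of you differ on the case without continuity. The paper adds a fallback claiming that, since all orbits converge to $\hstar$, any two orbits ``eventually share the same state''. That is false for asymptotic convergence. Under $U(h)=\hstar+\lambda(h-\hstar)$ with $0<\lambda<1$ in one dimension, the orbits of $\hstar+1$ and $\hstar-1$ stay on opposite sides of $\hstar$ forever and never meet. So you are right to make continuity at $\hstar$ (with $\cD$ Hausdorff, so limits are unique) an explicit hypothesis. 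Without it the proposition is simply false.

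Your counterexample, however, is miscomputed. Under that contraction, $\log\|h-\hstar\|/\log\lambda$ increases by exactly $1$ per step. Hence $\lfloor\cdot\rfloor \bmod 2$ flips at every step and satisfies $\rho\circ U=1-\rho$, not $\rho\circ U=\rho$. Replace it by the fractional part of $\log\|h-\hstar\|/\log\lambda$. More simply, in one dimension take $\rho(h)=\operatorname{sign}(h-\hstar)$ with $\rho(\hstar)=0$. This $\rho$ is $U$-invariant, separates $h_0(\sigma)=\hstar+1$ from $h_0(\sigma')=\hstar-1$, and is discontinuous only at $\hstar$. So in exact arithmetic a monostable linear cell with a threshold read-out does generalize over all horizons, which is exactly why the continuity hypothesis cannot be dropped.

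Finally, your ``robust separation'' paragraph does not weaken any hypothesis, since it still invokes continuity at $\hstar$; it can be omitted.
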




\begin{corollary}
  In a monostable model, memory of past observations and temporal horizon
  generalization are \emph{fundamentally
  incompatible}: no policy architecture can achieve both
  simultaneously.
\end{corollary}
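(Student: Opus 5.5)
The corollary should follow directly from Proposition~\ref{prop:mono} combined with Theorem~\ref{thm:main}. The plan is a short contradiction argument. Suppose a monostable model with idle dynamics $U$ achieves both memory of past observations and temporal horizon generalization. We formalize ``memory'' as the requirement that the policy eventually acts differently for at least two key observations $\sigma \neq \sigma'$ with $a^*(\sigma)\neq a^*(\sigma')$. Otherwise the task is trivial and no memory is needed.

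\textbf{Step 1: reduce to a compatible, separating read-out.} Take any optimal policy, written as $\pi=\bar\pi\circ\rho$ as in the setup, where $\rho$ may be an arbitrary read-out. This is what ``no policy architecture'' quantifies over. By Theorem~\ref{thm:main}, temporal horizon generalization forces $\rho$ to be compatible with $U$, i.e.\ $\rho\circ U=\rho$ on reachable states. It also forces $\rho$ to be separating for $\sigma,\sigma'$.

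\textbf{Step 2: apply monostability.} Proposition~\ref{prop:mono} says any compatible $\rho$ is constant and equal to $\rho(\hstar)$. Hence $\rho(h_0(\sigma))=\rho(h_0(\sigma'))$, contradicting separation. Equivalently, $\pi(U^T h_0(\sigma))=\pi(U^T h_0(\sigma'))$ for all $T$, so the policy cannot output the distinct optimal actions $a^*(\sigma)\neq a^*(\sigma')$ at every horizon.

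\textbf{Main obstacle.} The delicate point is that Proposition~\ref{prop:mono} yields constancy of $\rho$ along orbits in the limit. This requires $\rho$ to be continuous at $\hstar$, or else some argument via $\rho(h)=\rho(U^T h)$ with $U^T h\to\hstar$. Without continuity one only gets constancy on each orbit, not across orbits. I would handle this in one of two ways. The first is to assume $\rho$ is continuous, which is natural for neural read-outs. The second is to bypass $\rho$ entirely. For that, I would argue directly with a continuous $\pi$ and a finite action set, which the paper's RNN setting supplies. Since $U^T h_0(\sigma)$ and $U^T h_0(\sigma')$ both converge to $\hstar$, for large $T$ they lie in a neighborhood of $\hstar$ where $\pi$ is locally constant, provided $\hstar$ is not on a decision boundary. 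In that case the two chosen actions coincide for large $T$, which gives the contradiction. I would present the continuous-$\rho$ route as the main proof and note this direct argument as the robust alternative.
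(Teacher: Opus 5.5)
Your skeleton matches the paper's: the corollary is meant to be read off from Theorem~\ref{thm:main} and Proposition~\ref{prop:mono}, and you are right to distrust the continuity step. The ``even without continuity'' argument in the proof of Proposition~\ref{prop:mono} fails, because orbits converging to $\hstar$ need not ever meet. With no regularity assumption the corollary is in fact false. Take $\cH=\mathbb{R}$, $U(h)=h/2$, $h_0(\sigma)=1$, $h_0(\sigma')=-1$, $a^*(\sigma)=$ up, $a^*(\sigma')=$ down, and $\pi(h)=$ up if $h>0$, down if $h<0$. The map is monostable, yet $\pi(U^T h_0(\sigma))=a^*(\sigma)$ and $\pi(U^T h_0(\sigma'))=a^*(\sigma')$ for every $T$.

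The gap is that the route you designate as the main proof does not repair this. Step~1 fails for continuous read-outs in general. Horizon generalization only forces $\bar\pi\circ\rho$ to be invariant on reachable states. The necessity half of Theorem~\ref{thm:main} passes from $\bar\pi(\rho(Uh))=\bar\pi(\rho(h))$ to $\rho(Uh)=\rho(h)$, which needs $\bar\pi$ injective. In the example, the logits $\rho(h)=(h,-h)$ with $\bar\pi=\arg\max$ are exactly the natural neural read-out you have in mind. Here $\rho$ is continuous, the policy generalizes over all horizons, and $\rho\circ U\neq\rho$; yet your main route would derive a contradiction. Indeed, $\rho=\mathrm{id}$ is a continuous read-out for every policy, so continuity of $\rho$ restricts nothing.

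The read-out that horizon generalization genuinely forces to be invariant is $\pi$ itself, which is $\cA$-valued, and for it continuity at $\hstar$ means local constancy near $\hstar$. That is precisely the hypothesis of your ``robust alternative'' ($\hstar$ not on a decision boundary). Under it your direct argument is complete and needs neither Theorem~\ref{thm:main} nor Proposition~\ref{prop:mono}. Both orbits eventually enter the neighbourhood on which $\pi$ is constant, so $\pi(U^T h_0(\sigma))=\pi(U^T h_0(\sigma'))$ for large $T$, contradicting $a^*(\sigma)\neq a^*(\sigma')$.

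Make that the proof and add the assumption explicitly to the statement. Phrase it as local constancy of $\pi$ at $\hstar$ rather than as continuity of $\pi$: a continuous map from a connected $\cH$ into the finite set $\cA$ is constant, which would make the corollary trivial.
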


\subsubsection{Multistability as a solution}
\begin{definition}[Multistable map]
    The dynamics $U : \cH \to \cH$ is \emph{multistable} if it has at least two
  distinct attractors $A_1, A_2 \subset \cH$ with disjoint, non-empty
  basins of attraction $B_1, B_2 \subset \cH$.
  Fixed-point attractors ($A_i = \{h_i^\star\}$) are the simplest
  case, but limit cycles or more general compact invariant sets
  are included.
\end{definition}

\begin{theorem}[Multistability enables temporal horizon generalization]
  \label{thm:multi}
  Let $U$ be multistable with attractors $A_1, A_2$ and corresponding
  basins $B_1, B_2$.
  Suppose:
  \begin{enumerate}[label=(\roman*),nosep,leftmargin=*]
    \item For each $\sigma$, the encoding places $h_0(\sigma)$ in a
          specific basin $B_{k(\sigma)}$, \textit{i.e.},  the acquisition phase
          is well-posed.
    \item There exists a read-out $\rho : \cH \to \cD$ that
          \begin{itemize}
            \item is invariant within each basin:
                  $\rho(U(h)) = \rho(h)$ for all $h \in B_k$,
                  $k = 1, 2$, and
            \item separates the basins: $\rho(h) \neq \rho(h')$ with $h\in B_{1}, h'\in B_{2}$.
          \end{itemize}
    \item The trajectory $\{U^T(h_0(\sigma))\}_{T \geq 0}$ stays
          within $B_{k(\sigma)}$ for all $n$ (no crossing of basin
          boundaries).
  \end{enumerate}
  Then the policy $\pi = \bar{\pi} \circ \rho$ 
  generalizes over all temporal horizons (\cref{fig:THG}B, right).
\end{theorem}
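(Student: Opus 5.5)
The plan is to reduce Theorem~\ref{thm:multi} to Theorem~\ref{thm:main}: I will show that hypotheses (i)--(iii) force $\rho$ to be a compatible and separating read-out on the reachable set $R \coloneqq \{U^n(h_0(\sigma)) : \sigma \in \cO,\ n \geq 0\}$, which by the preceding Remark is all that compatibility requires. Alternatively, and more transparently, I will argue directly: it suffices to show that $\rho(h_T) = \rho(h_0(\sigma))$ for every $T \geq 0$ and that this value distinguishes the relevant classes of $\sigma$. Then $\pi(h_T) = \bar\pi(\rho(h_0(\sigma)))$ is independent of $T$, so the action at horizon $T$ coincides with the action that is optimal at some fixed horizon (say the training horizon, where $\pi$ is assumed optimal). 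Hence $\pi$ is optimal at every horizon.

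The key step is an induction on $T$. Fix $\sigma$ and write $k = k(\sigma)$. By (i), $h_0(\sigma) \in B_k$. By (iii), $U^T(h_0(\sigma)) \in B_k$ for all $T \geq 0$. The invariance clause of (ii) holds for every $h \in B_k$, so
\[
\rho\bigl(U^{T+1}(h_0(\sigma))\bigr) = \rho\bigl(U(U^{T}(h_0(\sigma)))\bigr) = \rho\bigl(U^{T}(h_0(\sigma))\bigr),
\]
and by induction $\rho(h_T) = \rho(h_0(\sigma))$. This establishes $\rho\circ U = \rho$ on $R$, i.e.\ compatibility in the sense of Definition~\ref{def:compatible}.

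For separation, take $\sigma, \sigma'$ whose optimal actions differ. I will argue that they must then be encoded in different basins: otherwise optimality at a single horizon, with $\rho$ constant along each orbit, could not hold. The separating clause of (ii), applied with $h_0(\sigma) \in B_1$ and $h_0(\sigma') \in B_2$, then gives $\rho(h_0(\sigma)) \neq \rho(h_0(\sigma'))$. Theorem~\ref{thm:main} now yields generalization for $T \geq 1$, and the case $T = 0$ is trivial since $h_0$ itself is read out.

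I expect the main obstacle to be the separation step. The definition requires separation for \emph{every} pair $\sigma \neq \sigma'$, yet with only two basins, (ii) can only separate $\sigma$'s lying in different basins. I would resolve this by interpreting the theorem for the case where the optimal action takes (at most) two values and $k(\sigma)$ is aligned with $a^*(\sigma)$. Equivalently, I would weaken ``separating'' to ``separating whenever $a^*(\sigma) \neq a^*(\sigma')$'', which is all the proof of Theorem~\ref{thm:main} actually uses. I would state this alignment explicitly as part of ``well-posed acquisition'' in (i).
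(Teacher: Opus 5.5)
Your argument is essentially the paper's. Both fix $\sigma$, use (iii) to keep $U^T(h_0(\sigma))$ in $B_{k(\sigma)}$, use the within-basin invariance in (ii) to get $\rho(U^{T+1}(h_0(\sigma)))=\rho(U^{T}(h_0(\sigma)))$ for all $T$, and then conclude via basin separation. Your explicit alignment of $k(\sigma)$ with $a^*(\sigma)$ is exactly the hypothesis the paper uses without stating it when it writes that ``different $\sigma$ values lead to different basins (assumption (i))''.

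One step should be dropped: the claim that $\sigma,\sigma'$ with different optimal actions must lie in different basins. Invariance along orbits does not make $\rho$ constant on a basin, since a discontinuous invariant can separate distinct orbits inside one basin. You also do not need this claim, because optimality at one horizon together with the $T$-independence of $\rho(h_T)$ already gives both separation and optimality at every horizon.
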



\subsection{Extension to non-constant idle dynamics}

In practice the agent keeps acting and observing even during the idle phase: observations fluctuate and the agent takes exploratory or
habitual actions, or achieves intermediate goals. As long as these perturbations never push the hidden state across the boundary between two basins of attraction, the read-out $\rho$ remains
informative and temporal horizon generalization is preserved.

\begin{proposition}[Robustness to non-constant perturbations]
  \label{prop:nonconst}
  Let $U_{o_t,a_{t-1}}(h) = f(h, o_t, a_{t-1})$ with potentially time-varying
  $(o_t, a_{t-1})$.
  Suppose:
  \begin{enumerate}[label=(\roman*),nosep,leftmargin=*]
    \item There exist attracting regions $B_1, B_2 \subset \cH$ that
          are forward-invariant under every $U_{o,a}$:
          $U_{o,a}(B_k) \subseteq B_k$ for all $(o,a)$.
    \item A read-out $\rho$ is invariant within each $B_k$ under every
          $U_{o,a}$: $\rho(U_{o,a}(h)) = \rho(h)$ for all $h \in B_k$.
  \end{enumerate}
  Then the conclusions of Theorem~\ref{thm:multi} hold with
  $U^T$ replaced by the composition $U_{o_T,a_{T-1}} \circ \cdots \circ U_{o_1,a_0}$ (\cref{fig:THG}C).
\end{proposition}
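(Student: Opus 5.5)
The argument is an induction on the horizon $T$ along the time-varying trajectory. It reuses the logic of Theorem~\ref{thm:multi}, with forward-invariance replacing hypothesis (iii) and per-step invariance of $\rho$ replacing invariance under a single map $U$. Fix an observation $\sigma$ and suppose, as in hypothesis (i) of Theorem~\ref{thm:multi}, that the encoding is well-posed, i.e.\ $h_0(\sigma)\in B_{k(\sigma)}$. Fix also an arbitrary sequence $(o_t,a_{t-1})_{t\ge1}$ of idle-phase observations and actions. Set
\[
h_T \coloneqq U_{o_T,a_{T-1}}\circ\cdots\circ U_{o_1,a_0}\bigl(h_0(\sigma)\bigr),
\]
with $h_0$ the encoded state when $T=0$. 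The goal is to show that, for every $T\ge 0$, both $h_T\in B_{k(\sigma)}$ and $\rho(h_T)=\rho(h_0(\sigma))$.

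The induction proceeds as follows. The base case $T=0$ is immediate. For the step, assume $h_{T-1}\in B_{k}$ with $k=k(\sigma)$ and $\rho(h_{T-1})=\rho(h_0(\sigma))$. Since $h_T=U_{o_T,a_{T-1}}(h_{T-1})$:
\begin{itemize}
\item hypothesis (i) (forward-invariance under every $U_{o,a}$) gives $h_T\in B_k$;
\item hypothesis (ii) (invariance of $\rho$ on $B_k$ under every $U_{o,a}$) gives $\rho(h_T)=\rho(h_{T-1})=\rho(h_0(\sigma))$.
\end{itemize}
Thus the no-crossing condition (iii) of Theorem~\ref{thm:multi} is derived rather than assumed. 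It holds uniformly over all admissible perturbation sequences, because the pair $(o,a)$ used at each step is arbitrary.

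To conclude, take $\sigma\neq\sigma'$ whose optimal actions differ. These are encoded into different basins, and $\rho$ must separate $B_1$ from $B_2$, so that $\rho(h_T(\sigma))=\rho(h_0(\sigma))\neq\rho(h_0(\sigma'))=\rho(h_T(\sigma'))$. This separating property has to be carried over from Theorem~\ref{thm:multi}(ii), since here it is implicit in ``the conclusions of Theorem~\ref{thm:multi} hold''. The decision-relevant representation $\rho(h_T)$ is therefore independent of $T$ and of the perturbations. It equals $\rho(h_0(\sigma))$, which is the value at which the policy $\bar\pi$ acts optimally, given that $\pi=\bar\pi\circ\rho$ is optimal for some horizon (by the argument of Theorem~\ref{thm:main}, optimality at one horizon transfers to all horizons once $\rho$ is constant along trajectories). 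Hence $\bar\pi(\rho(h_T))=a^*(\sigma)$ for all $T\ge0$.

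The main obstacle is bookkeeping rather than mathematics: making precise which hypotheses of Theorem~\ref{thm:multi} are inherited. The key inherited assumptions are the well-posed encoding and the separation of basins by $\rho$. I would state these explicitly as standing assumptions at the start, so that the only genuinely new ingredient is the induction above.
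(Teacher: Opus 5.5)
Your proof is correct and follows the paper's own argument. The paper likewise uses forward-invariance to keep $h_t$ in $B_{k(\sigma)}$, telescopes $\rho(h_T)=\rho(h_{T-1})=\cdots=\rho(h_0)$ via per-step invariance of $\rho$, and then defers the rest to Theorem~\ref{thm:multi}. Your explicit induction and your list of inherited standing assumptions (well-posed encoding $h_0(\sigma)\in B_{k(\sigma)}$, separation of the basins by $\rho$, and $\bar\pi$ being correct at $\rho(h_0(\sigma))$) make precise what the paper leaves implicit in ``the rest of the argument follows as in Theorem~\ref{thm:multi}.''
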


\section{Dynamics and multistability in RNNs}\label{section-theoryRNNs}

\subsection{Nonlinear RNNs}
RNNs encode time-dependencies in a recurrent hidden state $h_t$ whose update depends on the previous hidden state $h_{t-1}$ and current observation of a time-series $x_t$. It writes
\begin{equation}\label{eq:RNN}
    h_t = f_\theta(h_{t-1},x_t), \qquad \forall t \geq 1,
\end{equation}
where $f_\theta$ represents the update equation of the RNN and $\theta$ the learnable parameters of the network. Nonlinear versions such as LSTMs~\cite{hochreiter1997longshorttermmemory} and GRUs~\cite{cho2014learningphraserepresentationsusing} have dominated sequence modeling for decades, but many variants such as Janet~\cite{vanderwesthuizen2018unreasonableeffectivenessforgetgate}, (n)BRC~\cite{vecoven2021bioinspiredbistablerecurrentcell}, and MGU~\cite{zhou2016minimalgatedunitrecurrent} have appeared over the years. Due to their recurrent nature, RNNs can encode time dependencies in their time-varying dynamics (\cref{eq:RNN}), creating fading memory of past inputs. 

Nonlinear RNNs can also exhibit multistability, the coexistence of multiple possible state values $\bar{h}$ at equilibrium for a same constant input $\bar{x}$, making them compatible with \cref{thm:multi}. An RNN is multistable if its steady-state equation
\begin{equation}
    \bar{h} - f_\theta(\bar{h},\bar{x})=0,
\end{equation}
admits multiple solutions for some set of constant input $\bar{x}$. Multistability allows for persistent memory under constant input value through a history-dependent convergence towards one of the multiple equilibria~\cite{vecoven2021bioinspiredbistablerecurrentcell,lambrechts2023warmingrecurrentneuralnetworks}. Most nonlinear RNNs can reach multistability for some set of learnable parameters~$\theta$~\cite{lambrechts2023warmingrecurrentneuralnetworks}. In this work, we use the GRU, BRC and nBRC variants in our experiments (\cref{appendix-RNNeq}). 


\subsection{Parallelizable RNNs}\label{section-monoSSM}
Recent efforts have led to the emergence of novel RNN architectures that are parallelizable over the sequence length with competitive performance on sequential benchmarks. In this section, we show that most state-of-the-art parallelizable RNN approaches lead to strictly monostable models, making them incompatible with \cref{thm:multi}, hence THG. 

SSMs and linear RNNs exploit linear state dynamics to achieve parallelizable training, either through convolutional representations \cite{gu2021combiningrecurrentconvolutionalcontinuoustime, gu2022efficientlymodelinglongsequences} or the parallel scan algorithm \cite{blelloch1989scans, martin2018parallelizinglinearrecurrentneural, gu2024mambalineartimesequencemodeling}. The update equation of arbitrary linear RNNs writes $h_t = A(x_t)h_{t-1}+B(x_t)x_t$, where stability requires all eigenvalues of $A(x_t)$ to be contained within the unit circle in the complex plane. The steady-state solutions $\bar{h}=-(A(\bar{x})-I)^{-1}B(\bar{x})\bar{x}$
are explicit, meaning that only one \textit{and only one} solution $\bar{h}$ exists for any input $\bar{x}$. 

Gated RNNs can also achieve parallelizable training by removing the state dependency in their gating mechanisms, \textit{i.e.}, computing gate values from the input alone, $z_t = \sigma(W_z x_t)$. Several such architectures have been proposed, including xLSTM \cite{beck2024xlstmextendedlongshortterm}, HGRN \cite{qin2023hierarchicallygatedrecurrentneural}, MatMul-free LM \cite{zhu2025scalablematmulfreelanguagemodeling}, GILR \cite{martin2018parallelizinglinearrecurrentneural}, and minGRU \cite{feng2024weregrudminimalgatedrecurrent}, though some only achieve partial parallelization. It yields an update equation of the form $h_t = (1 - z_t) \odot h_{t-1} + z_t \odot \tilde{h}_t$,
where $\tilde{h}_t$ depends only on $x_t$. The steady-state solutions reduce to $\bar{h} = \tilde{h} = \phi(W_h \bar{x})$
where $\phi$ is an activation function whose structure is determined by the dynamical equation of $\tilde{h}_t$. The steady maps an explicit solution $\bar{h}$ to any input value $\bar{x}$. This result extends to gated RNNs with multiple gates, provided that no gate is state-dependent. In this work, we use minGRU as a representative linear gated RNN (\cref{appendix-RNNeq}).

\subsection{Bistable memory recurrent unit}\label{section-bmru}
Memory recurrent units (MRUs) represent a class of parallelizable RNNs with properties complementary to those of linear RNNs: they exhibit multistability but lack transient dynamics (\textit{i.e.},  they assume instantaneous convergence) \cite{degeeter2026parallelizable}. At each timestep $t$ and for a given input $x_t$, the MRU outputs a solution to the steady-state equation $\bar{h}_t - f_\theta(\bar{h}_t,x_t) = 0$. When multiple solutions exist, the previous state $\bar{h}_{t-1}$ determines the selected solution, effectively modeling history-dependent behavior. A prominent example is the bistable memory recurrent unit (BMRU), which integrates learnable hysteretic dynamics into a gated architecture \cite{degeeter2026parallelizable}. In this work, we use the BMRU as a representative multistable model to contrast with the monostable nature of modern gated linear RNNs (\cref{appendix-RNNeq}).

\section{Experiments}\label{section-experiments}
For our experiments, we use the T-maze \cite{NIPS2001_a38b1617} and introduce an extension called the LookupTreeMaze (each environments are formally described in \cref{appendix-tmaze} and \cref{appendix-treemaze}, respectively).

In the T-maze (\cref{fig:experiments}A), the agent received an observation $\sigma$ at the beginning of the maze that tells it the position of the reward at the end of the maze ($\sigma$ = up or down). The objective is to navigate the maze to reach the junction and make the correct decision $a^*(\sigma)$ at the junction to receive the reward. The T-maze is a prototypical example to test THG, as the information and objective are separated by maze length (\textit{i.e.},  defining the idle phase and the horizon). Although solving short T-maze environments is simple, solving long ones is known to be difficult. 

The LookupTreeMaze extends this environment by connecting several T-mazes in series (\cref{fig:experiments}B). The agent receives a lookup table $\sigma$ at the first time step that contains a random sampling of up and down directions corresponding to positive reward directions, with at least one of each. At the beginning of each T-maze within the LookupTreeMaze, the agent receives an index $\eta_i$ that references the position of the next reward in the lookup table. The objective is to navigate the maze to reach the next junction and make the correct decision $a^*(\sigma,\eta_i)$ at each junction to receive the reward. This environment combines very sparse informative observations $\sigma$ with frequently updated ones $\eta_i$ and has two intricate horizons ($T_{\sigma}$ and $T_{\eta_i}$), allowing to test THG in the case of non-constant idle dynamics. 


For each environment, we use two complementary approaches to test the connection between THG and RNN dynamics. In the first approach, we train nonlinear RNNs in a classical way, and compare performance and generalization capabilities with multistability a posteriori, allowing for unbiased evaluation. We then reproduce the experiments using minGRU as an example of parallelizable monostable RNN with transient dynamics and BMRU as an example of parallelizable bistable RNN without transient dynamics, allowing us to assess the role of each on THG separately. 

We quantify model performance by computing the average sum of the rewards obtained during an episode. Multistability is quantified using the variability among attractors (VAA) metric \cite{lambrechts2023warmingrecurrentneuralnetworks} (see \cref{appendix-vaa}): higher VAA values $\in[0,1]$ correspond to higher levels of multistability.

\begin{figure*}[t]
    \centering
    \includegraphics[width=\textwidth]{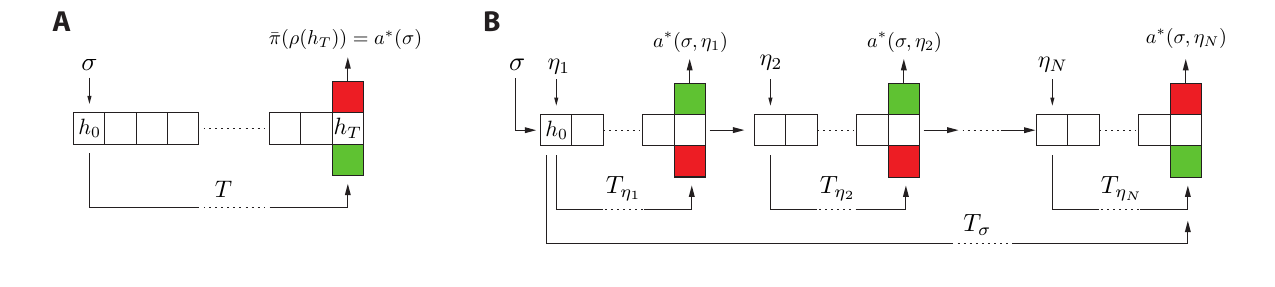}
    \caption{\textbf{Benchmarks.} \textbf{A.} Illustration of the T-maze environment. The starting observation $\sigma$ serves as the instruction for the decision required at the end of the corridor, both separated by the horizon~$T$. \textbf{B.} Illustration of the LookupTreeMaze environment. The starting observation $\sigma$ provides an instruction table for the decision required at the end of each corridor, and the observations $\eta_i$ tells where to read in the table $\sigma$ to take the correct decision at the next junction. This environment has two intricate horizons: $T_\sigma$, which separates the observation $\sigma$ provided at the start and each decisions, and $T_{\eta_i}$, which separates the observations $\eta_i$ provided at the beginning of each T-maze and the decisions.}
    \label{fig:experiments}
\end{figure*}

\subsection{THG on the T-maze}

\subsubsection{THG in nonlinear RNNs}
We first trained 90 models (30 GRU, 30 BRC and 30 nBRC) on two different T-maze environments (see \cref{appendix-hyperparameters}). The first one is a short-horizon problem: the T-maze length is randomly chosen between 1 and 3 for each new episode. The second one is a long-horizon problem: the T-maze length is fixed at 100. The models trained on shorter T-mazes learn an optimal policy in only a couple of training steps, with almost no variance. Those trained on longer T-mazes struggle to learn to solve the task properly and show high variance (the evolution of the rewards obtained by the agents during their training is provided in \cref{appendix-supFig}). This result is expected in an RL setting, as exploration is much more difficult in long T-mazes, leading to unstable training.

\begin{figure*}[t]
    \centering
    \includegraphics[width=\textwidth]{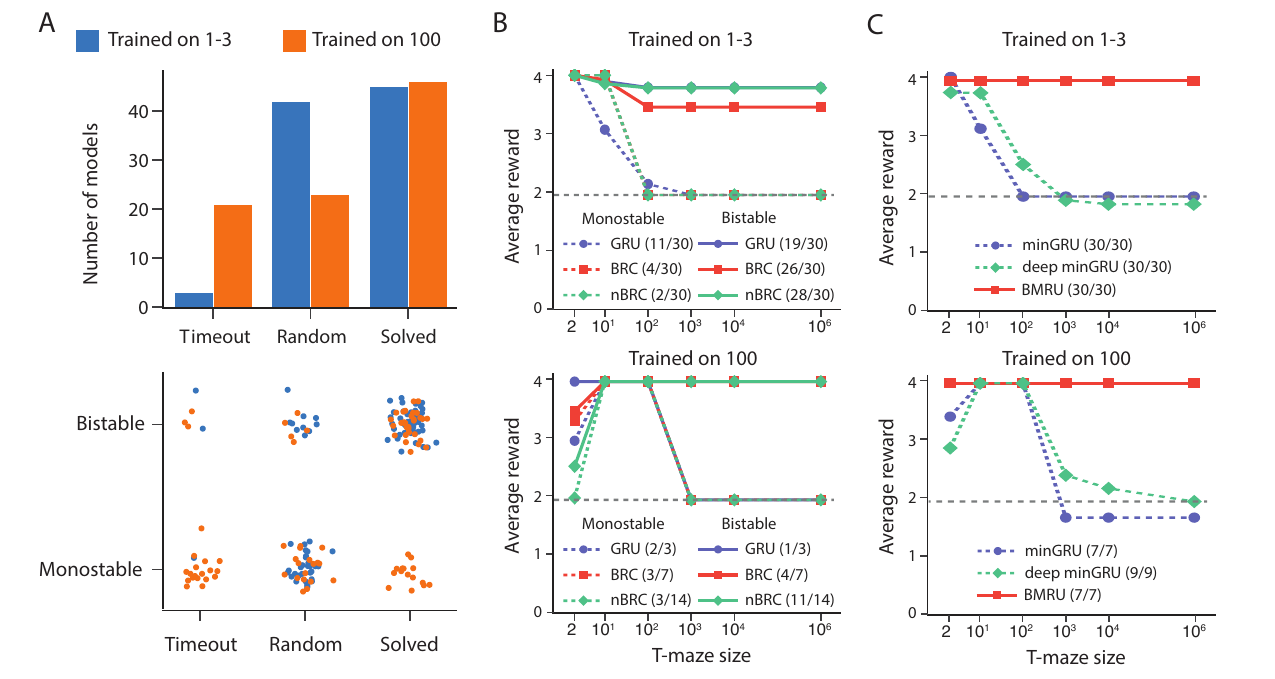}
    \caption{\textbf{Generalization capabilities on the T-maze.} \textbf{A.} Top, number of models that time-outed, reached the end but acted randomly, or solved the T-maze of size 100. Bottom, scatter plot of models separated by their performance (timeout, random or solved) and their monostable/bistable properties. Blue bars/scatters represent models trained on T-mazes of size 1-3, whereas orange bars/scatters represent models trained on T-mazes of size 100. \textbf{B.} Average reward obtained for each nonlinear RNN as a function of T-maze size. Models are separated into monostable models (dashed lines) and bistable models (full lines). \textbf{C.} Average reward obtained for each parallelizable RNN as a function of T-maze size. Models are separated into monostable models (dashed lines) and bistable models (full lines). The gray dashed line represents random guess.}
    \label{fig:tmaze}
\end{figure*}

We next tested whether training on T-mazes of length 100 improves model performance compared to first training on smaller T-mazes and evaluating on longer ones. To do so, we evaluated the models trained on both T-maze sizes on the T-maze of length 100. \Cref{fig:tmaze}A (top) shows the number of models that either never reach the end of the maze, choose a direction at random, or correctly solve the task for both the models trained on small T-mazes (blue) and the ones trained on the longer version (orange). The proportion of models that solve the task is almost equivalent for the two categories, showing that training on the large T-maze size provides little advantage as compared to training on smaller T-maze sizes, and it even leads to difficulties related to exploration. In addition, most models trained on smaller T-maze sizes reach the end of the longer T-maze, with half making the correct decision and the other half behaving at random. It suggests that these two subpopulations solved the task on the shorter T-maze using two different memorization mechanisms, one leading to generalization, the other not. 

\subsubsection{The role of multistability in THG}
We assessed if these memorization mechanisms were linked to RNN monostable/multistable properties. \cref{fig:tmaze}A (bottom) categorizes each model based on its performance on the long T-maze (timeout, random or solved) and its multistability properties (monostable or bistable). Each blue (resp.\ orange) dot corresponds to a model trained on the short (resp.\ long) T-maze. For the models trained on small T-mazes, the correlation between multistability and the ability to solve the environment (and therefore their generalization capability) is clear. No monostable model trained on the short T-maze solves the long one, whereas most bistable models do. This suggests that bistability (and more generally multistability) is a necessary condition for THG. The models directly trained on the longer T-mazes solve the environment using either slow transient dynamics or multistability. In addition, fewer models reached bistability in this case (57/90 for the short T-maze vs 36/90 for the long one). This suggests that training on long versions of the T-maze does not promote information encoding though multistability.

We then further evaluated THG capabilities of the models trained on either the short or the long T-maze on environments by testing them on T-mazes of length ranging from $2$ to $10^6$ (\cref{fig:tmaze}B). Models are classified based on whether they were trained on small (top graph) or long (bottom graph) T-mazes, and their stability properties. \Cref{fig:tmaze}B, top confirms that bistability leads to generalization for arbitrary long T-mazes, and that this property is independent on model type: the same observation can be made for GRU, BRC and nBRC. The average rewards of monostable models converge to random, indicating that these models always forget the initial information and choose a random direction. 

\Cref{fig:tmaze}B, bottom tells a more subtle story. Again, we observe that monostable models fail to generalize on longer mazes, whereas multistable models maintain their ability to generalize on longer mazes, but not on shorter T-mazes. This is due to the fact that even multistable models require transient dynamics to reach equilibrium, which are considerably slower when trained on longer mazes (\cref{appendix-supFig}). These transient dynamics might lead to erroneous decisions, as they were not exploited during training. These results suggest that training a multistable model on shorter rather than longer T-mazes not only helps training capabilities and efficiency, but also leads to improved THG. 

\subsubsection{THG in parallelizable RNNs}
We then reproduced the same experiments using the parallelizable RNN variants minGRU and BMRU (\Cref{fig:tmaze}C). The results show a similar trend than those of nonlinear RNNs: fewer models solve the task when trained on long-horizon (23/90) than on short horizon (90/90), and monostable minGRU fails to generalize on either shorter or longer mazes, whereas multistable BMRU generalizes to all maze sizes in all conditions. The fact that BMRU generalizes also on shorter mazes comes from the absence of transient dynamics by design: it converges to equilibrium in a single timestep, which further confirms our analysis above on the role of transient dynamics. 

Monostable models such as minGRU can however create long-lasting fading memory if more neurons and more layers are used in the network, which could lead to improved generalization. We tested this possibility by training a deep minGRU variant (4 layers of 128 neurons instead of 1 layer of 5 neurons). Although increasing network size indeed slightly improves generalization capabilities, it remains significantly below the capabilities of the small multistable network, which achieves true THG. 

\subsection{THG on the LookupTreeMaze}
The added difficulty in the LookupTreeMaze to THG is twofold: the models have to retain more information, but also be able to use the information stored in their state several times. We train all models on a random number of T-mazes, between 1 and 20, with maze sizes chosen uniformly from 1 to 3, and all models manage to learn the task (\cref{appendix-supFig}). We next evaluate the generalization capabilities of each models on longer versions of the environment by increasing the number of T-mazes. 

\begin{figure*}[t]
    \centering
    \includegraphics[width=\textwidth]{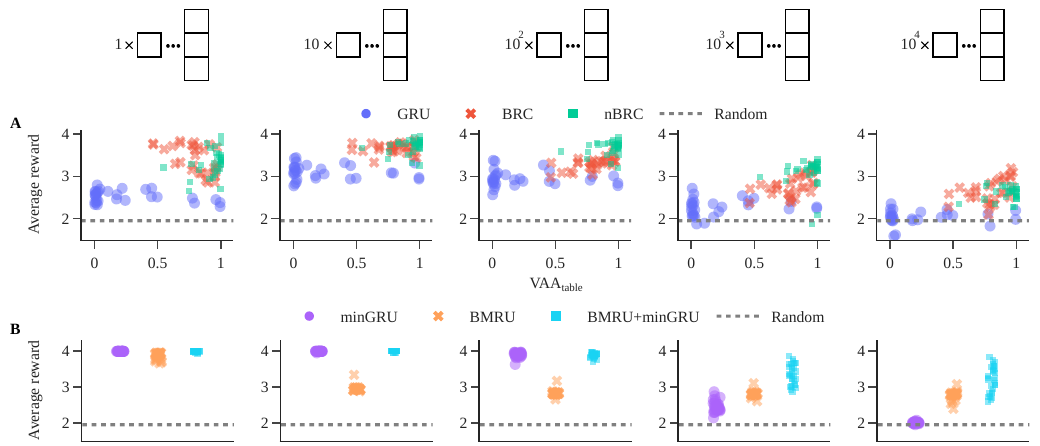}
    \caption{\textbf{Generalization capabilities on the LookupTreeMaze.} \textbf{A.} Average reward obtained by the nonlinear RNNs as a function of their VAA on different versions of the LookupTreeMaze. The number of mazes in the environment increases from left to right and is given at the top of the graph. A few points are out of scope (rewards axis) due to models timing out. \textbf{B.} Average reward obtained by the parallelizable RNNs as a function of their VAA on different versions of the LookupTreeMaze. The number of mazes in the environment increases from left to right and is given at the top of the graph. A few points are out of scope (rewards axis) due to models timing out.}
    \label{fig:treemaze}
\end{figure*}

\subsubsection{Nonlinear RNNs on the LookupTreeMaze}

We first investigate the ability of nonlinear RNNs to generalize on the number of T-mazes by evaluating all 90 models on several LookupTreeMazes with increasing number of mazes (length of~3). To observe the impact of multistability on the generalization, the \vaat{}, as described in \cref{appendix-vaa}, is computed to measure how well the lookup table is encoded in stable states. The information to encode in this task is no longer a single bit, therefore there is not anymore a clear distinction between monostable or multistable models. In the following observation, we call \textit{monostable} models the ones with a \vaat{} close to $0$, as opposed to the \textit{multistable} ones, whose \vaat{} are close to $1$. 

All nonlinear models perform reasonably well with 10 T-mazes, which is the mean number they saw during training (\Cref{fig:treemaze}A). Model performance tends to decrease when the number of T-mazes increase, but the degradation is more present for models with a low \vaat{}, which highlights the important role played by multistability. Degradation is however also present in highly multistable models, showing that multistability alone in not sufficient in more complex environments, but a mix of multistability and transient dynamics might be required. However, it is difficult to assess the role of each dynamical regimes on nonlinear RNNs, as both are intimately linked. 

\subsubsection{The synergy between transient dynamics and multistability in THG}
We therefore reproduced the same experiments using the parallelizable RNN variants minGRU (purple dots in \cref{fig:treemaze}) and BMRU (orange crosses in \cref{fig:treemaze}), in order to isolate the role of transient dynamics and multistability on solving the task. The results showed an interesting trend: minGRU is much more efficient than BMRU at solving the tasks for up to 100 T-mazes, but does not generalize to arbitrary horizons (the performance becomes random for all models at \num{10000} mazes). BMRU is however less efficient, but maintains a stable, above average performance even for a large number of T-mazes, showing that at least a subset of models achieve THG. This shows that, for environments that combine sparse observations with frequently updated ones, transient dynamics and multistability are both essential. 

To test the combined roles of both dynamical regimes, we reproduced the experiment with an hybrid architecture that is composed of half minGRU and half BMRU (blue squares in \cref{fig:treemaze}). This hybrid architecture combines the capabilities of both minGRU and BMRU: THG with high performance on a small number of T-mazes. It even largely outperforms both architectures for a high number of T-mazes, some models maintaining an almost optimal policy to up to \num{10000} T-mazes. These results highlight an essential synergy between transient dynamics and multistability in solving long-horizon, potentially complex tasks. These experiments also outline a solution to make this synergy compatible with parallelizability, providing a path towards scalable, efficient models with THG capabilities.

\section{Discussion}\label{section-discussion-limitations}
In this paper, we introduced the concept of temporal horizon generalization in RL, showed how it can reduce exploration cost and training duration, and linked the ability of RNN variants to reach THG with their dynamical properties. Through a mix of theoretical developments and experimental validations, we have shown that multistability is a critical property for THG, and that this property is absent in most state-of-the-art parallelizable RNN variants, advocating against the use of monolithic parallelizable architectures. 

\textbf{Limitations.} We address the limitations of our current approach and propose several promising avenues for future research. 

\textit{On the sufficiency of multistability.} While our theoretical results establish multistability as a necessary and sufficient condition for the existence of optimal policies with THG capabilities, reaching these policies during training is not guaranteed. This discrepancy is evidenced by our T-maze experiments: agents trained on long horizons successfully generalized to even longer sequences, yet failed on shorter ones. Developing generalization-aware training strategies would be an essential step to fully exploit the ability of multistable models to generalize.

\textit{The switch towards parallelizable architectures.} As the field trends toward parallelizable RNN variants to enable large-scale computation, a critical trade-off has emerged. We have demonstrated that current state-of-the-art models such as including SSMs \cite{martin2018parallelizinglinearrecurrentneural,gu2021combiningrecurrentconvolutionalcontinuoustime,gu2022efficientlymodelinglongsequences, gu2024mambalineartimesequencemodeling} and gated parallel RNN \cite{martin2018parallelizinglinearrecurrentneural,qin2023hierarchicallygatedrecurrentneural,beck2024xlstmextendedlongshortterm, feng2024weregrudminimalgatedrecurrent,  zhu2025scalablematmulfreelanguagemodeling} are inherently monostable, which fundamentally limits their THG capabilities. As a solution, we propose a shift toward hybrid architectures that integrate the efficiency of parallelizable transient dynamics with the robust memory of multistability. While our current experiments serve as a proof-of-concept, further research into these hybrid designs could lead to scalable, efficient models capable of true multistability-based horizon generalization.

\textit{THG in more complex benchmarks.} We empirically validated our theoretical findings using two environments specifically designed to isolate the effects of THG. While this focused approach allowed for a rigorous assessment of model capabilities, extending this analysis to more complex benchmarks is an important future direction. Investigating the interplay between multistability and THG in environments with multiple, competing objectives and nested temporal horizons will further clarify their practical significance.

\section*{Acknowledgements}
This present research was supported by the Belgian Government through the Federal Public Service Policy and Support. It also benefited from computational resources made available on Lucia, the Tier-1 supercomputer of the Walloon Region, infrastructure funded by the Walloon Region under the grant agreement n°1910247. Florent De Geeter gratefully acknowledges the financial support of the Walloon Region for Grant No. 2010235 – ARIAC by DW4AI.

\bibliographystyle{unsrtnat}
\bibliography{refs}

\clearpage
\appendix

\section{Related Work}\label{appendix-related-work}
Since our work sits at the intersection of several active research areas, we briefly survey three relevant topics in this section: generalization in RL, memory for POMDPs, and multistability in RNNs.

\paragraph{Generalization in reinforcement learning.}
Generalization is a long-standing challenge in reinforcement learning: policies learned in one setting often fail to transfer even to superficially similar environments \cite{korkmaz2024surveyanalyzinggeneralizationdeep}. Early work highlighted that deep RL agents can overfit dramatically to the specifics of their training environments, including random seeds and initial state distributions, even when the underlying task remains the same \cite{zhang2018dissection}. This prompted the development of dedicated generalization benchmarks such as CoinRun \cite{cobbe2019quantifying} and Procgen \cite{cobbe2020leveraging}, which expose policies to procedurally generated environments at test time, and systematic evaluations of what dimensions of variation agents can and cannot handle.

A comprehensive survey \cite{kirk2023survey} categorizes generalization along several axes: across environment parameters (reward, dynamics, layout), across tasks, and across levels of difficulty, each presenting distinct failure modes. One practical strategy to improve generalization is curriculum learning, in which the agent is progressively exposed to harder instances of the task, reducing the exploration burden and improving sample efficiency \cite{bengio2009curriculum, narvekar2020curriculum}. A closely related strategy is domain randomization, which widens the training distribution to encourage policies that are robust to unseen variations \cite{tobin2017domain}.

Generalization in fully observable settings is already difficult \cite{ghosh2021generalizationrldifficultepistemic} and partial observability amplifies the difficulty because the agent cannot distinguish states that appear identical but require different actions.

Our work focuses on a specific, formally defined axis of generalization: the temporal horizon. \citet{myers2025horizongeneralizationreinforcementlearning} introduced \textit{horizon generalization} in goal-conditioned MDPs and connected it to planning invariance, whereby an agent pursues the same policy regardless of whether it is given a distant goal or an intermediate waypoint. We extend this notion to the partially observable setting, where memory is the central bottleneck, and provide a formal necessary and sufficient condition that links horizon generalization to the dynamical properties of the agent
recurrent memory. Crucially, unlike curriculum-based or domain-randomization approaches, which seek to generalize by widening the training distribution, temporal horizon generalization allows an agent trained exclusively on short horizons to remain optimal at arbitrarily longer ones, with no requirement to expose it to long-horizon episodes during training.

\paragraph{Memory and recurrent architectures for POMDPs.}
Partial observability is ubiquitous in real-world sequential decision making. A principled solution is to maintain a belief state, a probability distribution over the true environment state conditioned on the observation history \cite{kaelbling1998planning}. Exact belief-state computation is generally intractable, and in deep RL it is typically replaced by a learned approximate summary encoded in the hidden state of a recurrent neural network. \citet{hausknecht2015deep} proposed the deep recurrent Q-network (DRQN), which replaces one of the feedforward components of DQN \cite{mnih2015human} with an LSTM layer, showing that a recurrent hidden state can substitute for full observability in Atari games. Equipping actor-critic algorithms such as A3C \cite{mnih2016asynchronous} and PPO \cite{schulman2017proximalpolicyoptimizationalgorithms} with recurrent encoders has since become one of the standard approaches to handling partial observability.

Beyond standard RNNs, various approaches have been proposed to enhance the memory capacity accessible to the agent. Memory-augmented networks such as the neural turing machine \cite{graves2014neural} and the differentiable neural computer \cite{graves2016hybrid} decouple capacity from the hidden-state dimension by equipping networks with an addressable external memory, enabling longer retention at the cost of additional architectural complexity. Transformer-based architectures provide a complementary strategy via self-attention over a growing context window. The gated transformer-XL (GTrXL) \cite{parisotto2020stabilizing} adapts this idea for RL by incorporating gating to stabilize training, achieving strong results on memory-intensive benchmarks. However, self-attention is quadratic in context length, which becomes prohibitive for very long horizons, and attention over a finite window cannot provide the unbounded, perfectly persistent memory required for temporal horizon generalization.

Structured world models offer a further alternative, learning a latent dynamics model of the environment from which a compact belief state can be inferred. Dreamer and its successors \cite{hafner2019dreamerv1,hafner2020dreamerv2,hafner2025dreamerv3} use a recurrent state space model to separate deterministic from stochastic latent dynamics, enabling policy learning and value estimation through imagined rollouts. VariBAD \cite{zintgraf2021varibad} takes a Bayesian perspective, meta-learning a posterior over task parameters that is updated online to approximate the belief state. These approaches enrich memory through architectural or training innovations, but do not address the dynamical question of whether the underlying recurrent cell can sustain the persistent, history-dependent representations that temporal horizon generalization demands. Our analysis shows that this is fundamentally a question of multistability in the cell dynamics, a property that is absent in linear and input-gated architectures regardless of how they are embedded in a larger system.

\paragraph{Multistability in RNNs.}
Monostability and fading memory are closely related concepts~\cite{boyd1985fadingmemory}: a system with a single stable equilibrium forces all trajectories to decay exponentially toward that unique attractor, inevitably erasing information about past inputs, which is the key characteristic of fading memory~\cite{boyd1985fadingmemory, pascanu2013difficultytrainingrecurrentneural, bainier2026state}. In contrast, persistent memory requires multistability, where multiple distinct stable states can coexist, enabling the network to maintain and switch between stored values without asymptotic decay. Despite this fundamental link, \textit{multistability remains largely neglected in the machine learning literature on recurrent architectures} \cite{liu2023stability}; notable exceptions include bio-inspired bistable cells~\cite{vecoven2021bioinspiredbistablerecurrentcell,lambrechts2023warmingrecurrentneuralnetworks} and the MRU line of works~\cite{degeeter2026parallelizable}, both of which explicitly harness multistable dynamics to overcome the inherent forgetting of monostable linear systems. 

\section{Temporal Horizon Generalization}\label{appendix-THG}
\subsection{Definition of temporal horizon generalization}
We consider a sparse, partially observable reinforcement learning problem. An agent acts in an environment with observation space $\cO$ and
action space $\cA$.
At each time step $t$, the agent updates an internal (hidden) state
$h_t \in \cH$ according to
\begin{equation}
  h_t = f(h_{t-1},\, o_t,\, a_{t-1}),
\end{equation}
where $o_t \in \cO$ is the current observation and $a_{t-1} \in \cA$
is the previous action.
The agent then selects $a_t = \pi(h_t)$, where $\pi$ is the policy.

Suppose the environment presents a key observation $\sigma \in \cO$ at time $t = 0$, which is encoded in the state $h_0(\sigma)$. It then provides observations $o_t \in \cO$ uninformative about $\sigma$ paired with actions $a_t \in \cA$ during the next $T$ steps, called the idle phase. At step $T$, called the horizon, the agent must take the optimal action $a^*(\sigma)\in\cA$, which depends on the value of $\sigma$.

\begin{definition}[Temporal horizon generalization]
An agent is said to \emph{generalize over temporal horizon} if the policy is optimal for all horizons $T\geq0$. That is, a same policy solves the task regardless of the length of the idle phase (\cref{fig:THG}A). 
\end{definition}

\subsection{The simpler case of constant idle dynamics}

\paragraph{Simplified dynamics during the idle phase.}
We first define a simplified idle phase ($t = 1, \ldots, T$) with constant observation $o_t=o_c$ and constant actions $a_{t-1}=a_c$. The update reduces to a fixed map on $\cH$:
\begin{equation}
  U(h) \;\coloneqq\; f(h,\, o_c,\, a_c).
\end{equation}
We write $U^T$ for the $T$-fold composition of $U$ with itself.
At the decision step $T$, the internal state is $h_{T} = U^{T}(h_0(\sigma))$.

\textbf{Condition for temporal horizon generalization}
We introduce a \emph{read-out function}
$\rho : \cH \to \cD$, where $\cD$ is the space of decision-relevant representations. This read-out function extracts the relevant information to solve the task from $h_t$ at any time.
The policy decomposes as $\pi(h_t) = \bar{\pi}(\rho(h_t))$.

\begin{definition}[Compatible read-out]
  A function $\rho : \cH \to \cD$ is \emph{compatible} with the
  dynamics $U$ if it is an invariant of $U$, \textit{i.e.}, 
  \begin{equation}
    \rho \circ U = \rho,
  \end{equation}
  or equivalently $\rho(U(h)) = \rho(h)$ for all reachable
  $h \in \cH$.
  It is \emph{separating} for $\sigma$ if, for every pair
  $\sigma \neq \sigma'$,
  \[
    \rho\!\left(h_0(\sigma)\right) \;\neq\; \rho\!\left(h_0(\sigma')\right).
  \]
\end{definition}

\begin{theorem}[Necessary and sufficient condition for horizon generalization]
  Assume a policy $\pi = \bar{\pi} \circ \rho$ that is optimal. The policy generalizes over temporal horizon, $T \geq 1$, if and
  only if $\rho$ is a compatible and separating read-out in the sense
  of Definition~\ref{def:compatible}.
\end{theorem}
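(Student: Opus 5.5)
We prove the two directions separately. Throughout, "optimal at horizon $T$" means $\pi(U^T(h_0(\sigma))) = a^*(\sigma)$ for every key observation $\sigma$. Since $\pi = \bar{\pi}\circ\rho$, this is the same as
\[
\bar{\pi}\bigl(\rho(U^T(h_0(\sigma)))\bigr) = a^*(\sigma).
\]
The hypothesis that $\pi$ is optimal will be read as optimality at the horizon $T_0$ used for training, in particular at $T=0$ (the encoding step). This gives $\bar{\pi}(\rho(h_0(\sigma))) = a^*(\sigma)$. We also use the standing assumption that the task is informative, namely $a^*(\sigma)\neq a^*(\sigma')$ for $\sigma\neq\sigma'$; this is the setting in which separation is meaningful.

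\textbf{Sufficiency.} Assume $\rho$ is compatible and separating. Compatibility applies on the reachable set $\{U^n(h_0(\sigma))\}$, so a one-line induction on $T$ gives $\rho(U^T(h_0(\sigma))) = \rho(h_0(\sigma))$ for all $T\ge 0$. Hence
\[
\pi(h_T) = \bar{\pi}(\rho(h_0(\sigma))) = a^*(\sigma),
\]
which is optimality at every horizon. Separation is not used for the implication itself. It only guarantees that the optimal $\bar{\pi}$ is well defined as a function of $\rho(h_0(\sigma))$, so that distinct $\sigma$ requiring distinct actions do not collide.

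\textbf{Necessity.} Assume $\pi$ is optimal for all $T$. Separation is immediate: if $\rho(h_0(\sigma))=\rho(h_0(\sigma'))$ with $\sigma\neq\sigma'$, then $\pi$ returns the same action for both at $T=0$, contradicting $a^*(\sigma)\neq a^*(\sigma')$. Compatibility is the main obstacle. Optimality only forces $\bar{\pi}\circ\rho$ to be constant along each orbit, not $\rho$ itself; $\rho$ could drift inside a single fibre of $\bar{\pi}$.

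I would handle this by exploiting the freedom in the factorisation $\pi=\bar{\pi}\circ\rho$, since $\rho$ is meant to extract only decision-relevant information. Concretely, set $\tilde{\rho}(h) := \pi(h)$ with $\cD=\cA$, or more generally quotient $\rho$ by the fibres of $\bar{\pi}$, and take $\bar{\pi}$ to be the identity on the quotient. On a reachable state $h=U^n(h_0(\sigma))$ we then have
\[
\tilde{\rho}(U(h)) = \pi(U^{n+1}(h_0(\sigma))) = a^*(\sigma) = \tilde{\rho}(h),
\]
so $\tilde{\rho}$ is compatible on the reachable set, exactly as the subsequent Remark permits. It is also separating by the argument above.

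The step to state carefully is the identification of "$\rho$" with this minimal decision-relevant read-out, that is, assuming $\bar{\pi}$ is injective on the image of $\rho$. I would make this a convention on the decomposition and note that without it only the weaker statement $\bar{\pi}\circ\rho\circ U=\bar{\pi}\circ\rho$ on reachable states holds. A second subtlety is that orbits from different $\sigma$ may intersect. I would check that this causes no conflict: an intersection would force $a^*(\sigma)=a^*(\sigma')$, which is excluded, so the induced read-out is well defined on the reachable set.
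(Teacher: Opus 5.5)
Your sufficiency direction is the same induction the paper uses. You are also right that separation does no work there once optimality at one horizon is assumed, although the paper cites it. Your separation argument is likewise the paper's, except that you state explicitly the assumption $a^*(\sigma)\neq a^*(\sigma')$, which the paper leaves implicit in ``the policy must produce different actions for different $\sigma$''.

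The genuine difference is in the necessity of compatibility. The paper passes from $\bar{\pi}(\rho(U^{T+1}(h_0(\sigma))))=\bar{\pi}(\rho(U^{T}(h_0(\sigma))))$ directly to $\rho(U(h))=\rho(h)$ on reachable states. Its only justification is that ``$\bar{\pi}$ must distinguish different $\sigma$''. That is exactly the fibre-drift step you single out, and for a given factorisation it does not follow. Take $\cH=\mathbb{R}$, $U(h)=2h$, $h_0(\sigma)=\pm 1$, $\rho=\mathrm{id}$ and $\bar{\pi}=\mathrm{sign}$. Then $\pi$ is optimal at every horizon and $\rho$ is separating, yet $\rho\circ U\neq\rho$ at every reachable state.

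Your repair is what makes the ``only if'' direction true. You quotient $\rho$ by the fibres of $\bar{\pi}$, or simply take $\tilde{\rho}=\pi$ with $\cD=\cA$, which amounts to assuming $\bar{\pi}$ is injective on the image of $\rho$. This is consistent with the paper's informal description of $\rho$ as extracting only decision-relevant information. The trade-off is this: the paper's route asserts compatibility for whatever $\rho$ is given, which its argument does not establish and which is false in general. Yours proves compatibility for the canonical read-out and correctly notes that an arbitrary $\rho$ only satisfies $\bar{\pi}\circ\rho\circ U=\bar{\pi}\circ\rho$ on the reachable set.

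Two small tidy-ups remain. First, optimality at a training horizon $T_0$ does not by itself give optimality at $T=0$. In the sufficiency direction, derive it from compatibility, $\bar{\pi}(\rho(h_0(\sigma)))=\bar{\pi}(\rho(U^{T_0}(h_0(\sigma))))=a^*(\sigma)$, rather than asserting ``in particular at $T=0$''. Second, $\tilde{\rho}=\pi$ is defined on all of $\cH$, so your orbit-intersection check is not needed for well-definedness. What it actually shows is that optimality at all horizons forces the orbits of distinct $\sigma$ to be disjoint.
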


\begin{proof}
  \textbf{($\Rightarrow$) Necessity.}
  Suppose the policy acts optimally at temporal horizon $T$.
  Then for every $\sigma$,
  \[
    \bar{\pi}\!\left(\rho\!\left(U^{T+1}(h_0(\sigma))\right)\right)
    = \bar{\pi}\!\left(\rho\!\left(U^{T}(h_0(\sigma))\right)\right)= a^*(\sigma),
  \]
  where $a^*(\sigma)$ denotes the correct action for $\sigma$.
  Since $U^{T+1}(h_0(\sigma)) = U(U^T(h_0(\sigma)))$, and since
  $\bar{\pi}$ must distinguish different $\sigma$, we need
  $\rho(U(h)) = \rho(h)$ for all states $h$ reachable at step $T$.
  By induction this extends to all $T\geq0$, giving $\rho \circ U = \rho$
  on all reachable states.
  Separability follows from the optimality assumption itself: the
  policy must produce different actions for different $\sigma$.

  \textbf{($\Leftarrow$) Sufficiency.}
  If $\rho \circ U = \rho$, then by induction
  $\rho(U^T(h)) = \rho(h)$ for all $T \geq 1$.
  Therefore
  \[
    \rho\!\left(U^{T+1}(h_0(\sigma))\right)
    = \rho\!\left(U^T(h_0(\sigma))\right)
    \quad \forall\, T \geq 1.
  \]
  Since $\rho$ is also separating, the policy $\bar{\pi} \circ \rho$
  recovers the correct action at every horizon.
\end{proof}

\begin{remark}
  Condition~\eqref{eq:invariance} needs to hold only on states that
  are \emph{reachable} after encoding some $\sigma$, \textit{i.e.},  on the set
  $\{U^n(h_0(\sigma)) : \sigma \in \cO,\; n \geq 0\}$.
  It is not required for every $h \in \cH$.
\end{remark}

\subsubsection{Monostability precludes temporal horizon generalization}

\begin{definition}[Monostable map]
  $U : \cH \to \cH$ is \emph{monostable} (or globally attracting) if
  it has a unique globally attracting fixed point $\hstar \in \cH$:
  \[
    U^T(h) \;\longrightarrow\; \hstar \quad \text{as } T \to \infty,
    \quad \forall\, h \in \cH.
  \]
\end{definition}

\begin{proposition}[Monostability precludes temporal horizon generalization]
  Let $U$ be monostable with attractor $\hstar$.
  Let $\rho : \cH \to \cD$ be any read-out function that is
  compatible with $U$ in the sense of~\eqref{eq:invariance}.
  Then $\rho$ is constant on the basin of attraction of $\hstar$,
  \textit{i.e.},  $\rho(h) = \rho(\hstar)$ for all $h \in \cH$.
  In particular, $\rho$ cannot be separating for any two distinct
  observations $\sigma \neq \sigma'$ (\cref{fig:THG}B, left).
\end{proposition}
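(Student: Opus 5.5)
The plan is to combine the invariance $\rho\circ U=\rho$ with the convergence $U^T(h)\to\hstar$. Fix any $h\in\cH$. Iterating the compatibility relation by induction on $T$ gives $\rho(U^T(h))=\rho(h)$ for every $T\ge 0$, so the sequence $\bigl(\rho(U^T(h))\bigr)_{T\ge0}$ is constant and equal to $\rho(h)$. We then need to pass to the limit $T\to\infty$ and conclude $\rho(h)=\rho(\hstar)$. Since $h$ is arbitrary and the basin of $\hstar$ is all of $\cH$, $\rho$ is constant on $\cH$.

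The step requiring care is the limit passage. Invariance alone does not link the value at the limit point to values along the orbit: a discontinuous $\rho$ could equal $\rho(h)$ on the orbit and take a different value at $\hstar$. Moreover, $\hstar$ itself satisfies $\rho(U(\hstar))=\rho(\hstar)$ trivially. I would therefore make explicit an assumption implicit in the notion of a read-out: $\rho$ is continuous, with $\cD$ a Hausdorff (e.g.\ metric) space. Continuity then gives $\rho(h)=\lim_T\rho(U^T(h))=\rho(\lim_T U^T(h))=\rho(\hstar)$, and uniqueness of limits in $\cD$ makes this well-defined.

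If one prefers to keep only invariance on reachable states, as in the Remark after Theorem~\ref{thm:main}, the same argument applies to each reachable orbit $\{U^n(h_0(\sigma))\}_{n\ge 0}$. The conclusion is that $\rho(h_0(\sigma))=\rho(\hstar)$ for every $\sigma$. Because the orbits of all $h_0(\sigma)$ converge to the same point $\hstar$, continuity of $\rho$ at $\hstar$ again suffices.

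The final claim is then immediate. For any $\sigma\neq\sigma'$ we get $\rho(h_0(\sigma))=\rho(\hstar)=\rho(h_0(\sigma'))$, which contradicts the separating condition of Definition~\ref{def:compatible}. By Theorem~\ref{thm:main}, no optimal policy of the form $\bar\pi\circ\rho$ can then generalize over temporal horizon.
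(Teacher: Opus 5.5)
Your proof is correct and follows the paper's own argument: iterate $\rho\circ U=\rho$ along the orbit, then pass to the limit $T\to\infty$ using continuity of $\rho$, which the paper also invokes as a natural assumption for a learned read-out. You are also right to insist on continuity rather than rely on the paper's supplementary ``even without continuity'' claim that orbits converging to $\hstar$ must eventually share a state; that claim fails, e.g.\ for $U(h)=h/2$ on $\mathbb{R}$, where $\rho(h)=\log_2|h|-\lfloor\log_2|h|\rfloor$ for $h\neq0$ and $\rho(0)=0$ is invariant under $U$ yet separates $h=1$ from $h=3$.
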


\begin{proof}
  Let $h \in \cH$ be arbitrary.
  Since $U$ is monostable, $U^T(h) \to \hstar$.
  Using the invariance of $\rho$ repeatedly:
  \[
    \rho(h)
    = \rho(U(h))
    = \rho(U^2(h))
    = \cdots
    = \rho(U^T(h)).
  \]
  If $\rho$ is continuous (a natural assumption for a learned
  read-out), then taking $T \to \infty$ gives $\rho(h) = \rho(\hstar)$.

  Even without continuity, the invariance condition $\rho \circ U = \rho$ implies that $\rho$ is constant on every
  forward orbit $\{U^T(h) : T \geq 0\}$.
  Since all orbits converge to $\hstar$, any two orbits eventually
  share the same state, so they must share the same value of $\rho$.
  Hence $\rho$ cannot distinguish any two initial states, including
  $h_0(\sigma)$ and $h_0(\sigma')$ for $\sigma \neq \sigma'$.
\end{proof}

\begin{corollary}
  In a monostable model, memory of past observations and temporal horizon
  generalization are \emph{fundamentally
  incompatible}: no policy architecture can achieve both
  simultaneously.
\end{corollary}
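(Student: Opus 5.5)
The statement to prove is the Corollary following Proposition~\ref{prop:mono}: in a monostable model, memory of past observations and temporal horizon generalization cannot coexist. I will derive it by combining Theorem~\ref{thm:main} with Proposition~\ref{prop:mono}, arguing by contradiction.

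First I will make the two notions precise. \emph{Memory of past observations} means the agent can act optimally at some horizon, so the optimal action $a^*(\sigma)$ is recovered from $h_T$. In particular, there exist $\sigma\neq\sigma'$ with $a^*(\sigma)\neq a^*(\sigma')$; otherwise the task would be trivial. \emph{No policy architecture} means any decomposition $\pi=\bar{\pi}\circ\rho$, with arbitrary $\cD$, $\rho$ and $\bar{\pi}$. This costs no generality, since the trivial choice $\rho=\mathrm{id}$, $\bar{\pi}=\pi$ is always available.

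Now suppose that, for a monostable $U$ with attractor $\hstar$, some policy $\pi=\bar{\pi}\circ\rho$ is optimal and generalizes over temporal horizon.

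1. By the necessity direction of Theorem~\ref{thm:main}, $\rho$ is compatible with $U$ and separating for $\sigma$.
2. By Proposition~\ref{prop:mono}, compatibility forces $\rho(h)=\rho(\hstar)$ on the whole basin. Under monostability the basin is all of $\cH$, so in particular this holds for $h_0(\sigma)$ and $h_0(\sigma')$.
3. Hence $\rho(h_0(\sigma))=\rho(h_0(\sigma'))$, contradicting separation.

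Equivalently, $\pi(U^T(h_0(\sigma)))=\bar{\pi}(\rho(\hstar))$ for every $\sigma$ and every $T$. The policy therefore outputs the same action for every $\sigma$, which cannot equal both $a^*(\sigma)$ and $a^*(\sigma')$.

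The main obstacle is that compatibility is only required on reachable states, while Proposition~\ref{prop:mono} is stated for all $h\in\cH$. I will handle this with the orbit argument from the proposition's proof, restricted to reachable states.

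1. Invariance makes $\rho$ constant along each forward orbit $\{U^n(h_0(\sigma))\}_{n\ge0}$, and every orbit converges to $\hstar$.
2. If $\rho$ is continuous, the constant value on each orbit equals $\rho(\hstar)$.
3. Without continuity, I will instead argue directly from optimality at large horizons. The action $\bar{\pi}(\rho(U^T(h_0(\sigma))))$ does not depend on $T$. If $\pi$ is continuous, or the action set is discrete with $\pi$ locally constant near $\hstar$, then for large $T$ this action equals $\pi(\hstar)$, independently of $\sigma$. That gives the contradiction.

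I will state this mild regularity assumption explicitly (continuity of $\rho$ or $\pi$ at $\hstar$). As the proof of Proposition~\ref{prop:mono} itself notes, this is natural for learned read-outs. It is needed because, without it, a pathological discontinuous $\rho$ could distinguish orbits that converge to $\hstar$ but never actually reach it.
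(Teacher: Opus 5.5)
Your main chain is exactly how the paper gets the corollary, which it states without a separate proof: necessity in Theorem~\ref{thm:main} yields a compatible, separating $\rho$; Proposition~\ref{prop:mono} makes $\rho$ constant; contradiction. You depart from the paper by rejecting its ``even without continuity'' step, and you are right to, because orbits converging to $h^\star$ need never meet. For example, $U(h)=h/2$ on $\mathbb{R}$ is monostable, yet $\rho=\mathrm{sign}$ satisfies $\rho\circ U=\rho$ everywhere and separates the encodings $h_0(\sigma)=1$, $h_0(\sigma')=-1$. Sending $\rho=1$ to the action $a^*(\sigma)$ and $\rho=-1$ to $a^*(\sigma')$ gives a policy that is optimal at every horizon. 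So the corollary is false as literally stated, and a regularity hypothesis like yours is genuinely required, not merely convenient.

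One refinement: impose the hypothesis on $\pi$, not on an arbitrary factor $\rho$. In the same example take $\rho=\mathrm{id}$, which is continuous, and let $\bar\pi$ be the sign rule. Horizon generalization still holds, but $\rho$ is not compatible, since $\rho(U(1))=1/2\neq 1$. So the necessity direction of Theorem~\ref{thm:main} that you invoke in step~1 fails for general factorizations. The paper's proof passes from $\bar\pi(\rho(U(h)))=\bar\pi(\rho(h))$ to $\rho(U(h))=\rho(h)$, and that does not follow. The step is valid for the particular factorization $\rho=\pi$, $\bar\pi=\mathrm{id}$, and there ``$\rho$ continuous at $h^\star$'' is just your condition on $\pi$.

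Your direct argument is therefore the one to keep. Optimality at every horizon gives $\pi(U^T(h_0(\sigma)))=a^*(\sigma)$ for all $T$. Continuity of $\pi$ at $h^\star$ (local constancy, for a discrete action set) then forces $a^*(\sigma)=\pi(h^\star)$ for every $\sigma$, which is the contradiction. This argument needs neither Theorem~\ref{thm:main} nor Proposition~\ref{prop:mono}, and the reachable-states issue disappears.
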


\subsubsection{Multistability as a solution}
\begin{definition}[Multistable map]
  $U : \cH \to \cH$ is \emph{multistable} if it has at least two
  distinct attractors $A_1, A_2 \subset \cH$ with disjoint, non-empty
  basins of attraction $B_1, B_2 \subset \cH$.
  Fixed-point attractors ($A_i = \{h_i^\star\}$) are the simplest
  case, but limit cycles or more general compact invariant sets
  are included.
\end{definition}

\begin{theorem}[Multistability enables temporal horizon generalization]
  Let $U$ be multistable with attractors $A_1, A_2$ and corresponding
  basins $B_1, B_2$.
  Suppose:
  \begin{enumerate}[label=(\roman*),nosep,leftmargin=*]
    \item For each $\sigma$, the encoding places $h_0(\sigma)$ in a
          specific basin $B_{k(\sigma)}$, \textit{i.e.},  the acquisition phase
          is well-posed.
    \item There exists a read-out $\rho : \cH \to \cD$ that
          \begin{itemize}
            \item is invariant within each basin:
                  $\rho(U(h)) = \rho(h)$ for all $h \in B_k$,
                  $k = 1, 2$, and
            \item separates the basins: $\rho(h) \neq \rho(h')$ with $h\in B_{1}, h'\in B_{2}$.
          \end{itemize}
    \item The trajectory $\{U^T(h_0(\sigma))\}_{T \geq 0}$ stays
          within $B_{k(\sigma)}$ for all $n$ (no crossing of basin
          boundaries).
  \end{enumerate}
  Then the policy $\pi = \bar{\pi} \circ \rho$ 
  generalizes over all temporal horizons (\cref{fig:THG}B, right).
\end{theorem}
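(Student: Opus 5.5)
The plan is to reduce Theorem~\ref{thm:multi} to the sufficiency direction of Theorem~\ref{thm:main}. That result needs only two things on the reachable set $\{U^n(h_0(\sigma)) : n \ge 0\}$: that $\rho$ is invariant there, and that it is separating on the initial encodings. So I would check each property from hypotheses (i)--(iii). Then I would build $\bar\pi$ explicitly so that it returns $a^*(\sigma)$ at every horizon.

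\textbf{Step 1 (invariance along trajectories).} Fix $\sigma$ and write $h_n = U^n(h_0(\sigma))$. By (iii), every $h_n$ lies in $B_{k(\sigma)}$. The within-basin invariance in (ii) then gives $\rho(h_{n+1}) = \rho(U(h_n)) = \rho(h_n)$ for each $n$. By induction, $\rho(U^T(h_0(\sigma))) = \rho(h_0(\sigma))$ for all $T \ge 0$. Hence $\rho \circ U = \rho$ holds on the whole reachable set, which is exactly the reachable-state version of \eqref{eq:invariance} allowed by the Remark after Theorem~\ref{thm:main}. Hypothesis (i) is what makes this well defined: $h_0(\sigma)$ genuinely belongs to a single basin.

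\textbf{Step 2 (separation and definition of $\bar\pi$).} If $\sigma$ and $\sigma'$ are encoded in different basins, the separation clause of (ii) gives $\rho(h_0(\sigma)) \neq \rho(h_0(\sigma'))$. I would define $\bar\pi(d) = a^*(\sigma)$ for every $d \in \rho(B_{k(\sigma)})$. The optimal action then depends only on the basin index, so the task is read as a function of the basin.

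This is where I expect the main obstacle. With only two basins, $\rho$ can separate at most two classes of $\sigma$. I therefore need $a^*$ to be constant on each fiber $\{\sigma : k(\sigma) = k\}$; otherwise $\bar\pi$ is not well defined. Equivalently, the acquisition map $k$ must refine the decision map $a^*$. I would make this explicit, either by reading (i) as ``the encoding is well-posed for the task'', or by noting that it holds when the number of distinct actions is at most the number of basins, as in the T-maze. A cleaner alternative avoids $k$ altogether: by Step 1, $\rho$ is constant along each orbit, so the policy can be defined directly on the orbit values $\rho(h_0(\sigma))$. This requires only that $\rho(h_0(\sigma)) = \rho(h_0(\sigma'))$ implies $a^*(\sigma) = a^*(\sigma')$, which is precisely the separating property of Definition~\ref{def:compatible} restricted to decision-relevant distinctions.

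\textbf{Step 3 (conclusion).} For any $T \ge 0$, Step 1 gives
\[
\pi(U^T(h_0(\sigma))) = \bar\pi\bigl(\rho(U^T(h_0(\sigma)))\bigr) = \bar\pi\bigl(\rho(h_0(\sigma))\bigr) = a^*(\sigma),
\]
so the same policy is optimal at every horizon. This is temporal horizon generalization, and it is also the ($\Leftarrow$) direction of Theorem~\ref{thm:main} applied on the reachable set.
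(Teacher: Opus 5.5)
Your proof is correct and is essentially the paper's argument: (iii) keeps the orbit inside $B_{k(\sigma)}$, within-basin invariance then gives $\rho(U^{T+1}(h_0(\sigma)))=\rho(U^{T}(h_0(\sigma)))$ for all $T\ge 0$, and separation of the basins gives the correct action for each $\sigma$. The paper skips over the point you flag: it reads (i) as ``distinct $\sigma$ land in distinct basins'' and leaves $\bar{\pi}$ implicit, so your explicit condition that $a^*$ factor through $k$ is the right repair. State it as a hypothesis rather than relying on your aside that it ``holds when the number of actions is at most the number of basins'', since that is only a necessary condition (the encoding could still put two cues with different optimal actions in the same basin).
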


\begin{proof}
  Fix $\sigma$ and let $k = k(\sigma)$.
  By assumption (iii), $U^T(h_0(\sigma)) \in B_k$ for all $T \geq 0$.
  By assumption (ii), $\rho$ is invariant within $B_k$, so
  \[
    \rho\!\left(U^{T+1}(h_0(\sigma))\right)
    = \rho\!\left(U^T(h_0(\sigma))\right)
    \quad \forall\, T \geq 0.
  \]
  The policy therefore produces the same action at every horizon $T$. Since different $\sigma$ values lead to different basins (assumption
  (i)), and $\rho$ separates the basins (assumption (ii)), distinct
  $\sigma$ values yield distinct read-outs, so the policy is correct
  for every $\sigma$.
\end{proof}

\subsection{Extension to non-constant idle dynamics}

In practice the agent keeps acting and observing even during the idle phase: observations fluctuate and the agent takes exploratory or
habitual actions, or achieves intermediate goals. As long as these perturbations never push the hidden state across the boundary between two basins of attraction, the read-out $\rho$ remains
informative and temporal horizon generalization is preserved.

\begin{proposition}[Robustness to non-constant perturbations]
  Let $U_{o_t,a_{t-1}}(h) = f(h, o_t, a_{t-1})$ with potentially time-varying
  $(o_t, a_{t-1})$.
  Suppose:
  \begin{enumerate}[label=(\roman*),nosep,leftmargin=*]
    \item There exist attracting regions $B_1, B_2 \subset \cH$ that
          are forward-invariant under every $U_{o,a}$:
          $U_{o,a}(B_k) \subseteq B_k$ for all $(o,a)$.
    \item A read-out $\rho$ is invariant within each $B_k$ under every
          $U_{o,a}$: $\rho(U_{o,a}(h)) = \rho(h)$ for all $h \in B_k$.
  \end{enumerate}
  Then the conclusions of Theorem~\ref{thm:multi} hold with
  $U^T$ replaced by the composition $U_{o_T,a_{T-1}} \circ \cdots \circ U_{o_1,a_0}$ (\cref{fig:THG}C).
\end{proposition}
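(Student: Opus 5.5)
The plan is to mimic the proof of Theorem~\ref{thm:multi}, with the $T$-fold power $U^T$ replaced by the composed map
\[
  \Phi_T \coloneqq U_{o_T,a_{T-1}} \circ \cdots \circ U_{o_1,a_0}, \qquad \Phi_0 = \mathrm{id},
\]
so that $h_T = \Phi_T(h_0(\sigma))$ and $\Phi_{T+1} = U_{o_{T+1},a_T}\circ\Phi_T$. I keep assumption (i) of Theorem~\ref{thm:multi} (the encoding places $h_0(\sigma)$ in a definite region $B_{k(\sigma)}$) and the separating part of its assumption (ii) ($\rho(h)\neq\rho(h')$ for $h\in B_1$, $h'\in B_2$), since the proposition only modifies the dynamical hypotheses.

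\textbf{Step 1: no basin crossing.} Assumption (i) of the proposition replaces assumption (iii) of Theorem~\ref{thm:multi}, so this becomes a lemma. I will show by induction on $T$ that $\Phi_T(h_0(\sigma))\in B_{k(\sigma)}$ for every $T\ge 0$ and every admissible sequence $(o_t,a_{t-1})$. The base case is the well-posed encoding. For the inductive step, write $\Phi_{T+1}(h_0) = U_{o_{T+1},a_T}(\Phi_T(h_0))$; the forward invariance $U_{o,a}(B_k)\subseteq B_k$, which holds uniformly over all pairs $(o,a)$, keeps the state in $B_k$.

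\textbf{Step 2: invariance of the read-out.} Using Step 1 and assumption (ii) applied at the point $\Phi_T(h_0(\sigma))\in B_k$ with the pair $(o_{T+1},a_T)$, I get
\[
  \rho(\Phi_{T+1}(h_0(\sigma))) = \rho(\Phi_T(h_0(\sigma))).
\]
Telescoping then gives $\rho(\Phi_T(h_0(\sigma)))=\rho(h_0(\sigma))$ for all $T\ge0$.

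\textbf{Step 3: correct action.} Distinct $\sigma$ are encoded in distinct regions, and $\rho$ separates the regions, so $\rho(h_T)$ determines $\sigma$ at every horizon. Hence $\bar\pi\circ\rho$, which is optimal at some horizon, returns $a^*(\sigma)$ at every $T$. The intermediate actions may depend on the varying inputs, but the decision-relevant read-out does not.

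\textbf{Main obstacle.} The delicate point is that the action sequence is itself generated by the policy, so $(o_t,a_{t-1})$ may depend on the state. The resolution is that assumptions (i) and (ii) are quantified over all pairs $(o,a)$, so no matter how the inputs are chosen (adaptively or adversarially) the induction goes through unchanged. I would state this explicitly rather than rely on it silently.
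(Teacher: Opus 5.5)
Your proposal is correct and follows the paper's proof essentially step for step. Forward invariance keeps $h_t$ in $B_{k(\sigma)}$, invariance of $\rho$ under every $U_{o,a}$ telescopes to $\rho(h_T)=\rho(h_0)$, and the encoding and separation hypotheses carried over from Theorem~\ref{thm:multi} finish the argument; your explicit induction, your remark that $\bar{\pi}$ must be optimal at some horizon, and your note that adaptively chosen inputs are harmless (because the hypotheses are quantified over all $(o,a)$) only make explicit what the paper leaves implicit.
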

\begin{proof}
  By forward invariance, $h_t \in B_{k(\sigma)}$ for all $t \geq 0$.
  By the invariance of $\rho$ within $B_k$,
  \[
    \rho(h_{T})
    = \rho(U_{o_T,a_{T-1}}(h_{T-1}))
    = \rho(h_{T-1})
    = \cdots
    = \rho(h_0).
  \]
  The rest of the argument follows as in Theorem~\ref{thm:multi}.
\end{proof}

\section{Reinforcement Learning in POMDPs}

Formally, a POMDP $P$ is defined as an 8-tuple 
$$P = (\mathcal{S}, \mathcal{A}, \mathcal{O}, p_0, T, R, O, \gamma),$$ where $\mathcal{S}$ is the state space, $\mathcal{A}$ is the action space, and $\mathcal{O}$ is the observation space. The initial state distribution 
$$p_0(s_0) : \mathcal{S} \rightarrow \left[0, 1\right]$$ 
gives the probability of $s_0 \in \mathcal{S}$ being the initial state of the POMDP. The Markovian system dynamics are defined by the transition probability distribution 
$$T(s_{t+1} \mid s_t, a_t) : \mathcal{S} \times \mathcal{A} \times \mathcal{S} \rightarrow \left[0, 1\right],$$ which gives the probability that taking action $a_t \in \mathcal{A}$ in state $s_t \in \mathcal{S}$ leads to state $s_{t+1} \in \mathcal{S}$. The reward function 
$$R(s_t, a_t, s_{t+1}) : \mathcal{S} \times \mathcal{A} \times \mathcal{S} \rightarrow \mathbb{R}$$ 
gives the immediate reward for transitioning from state $s_t \in \mathcal{S}$ to state $s_{t+1} \in \mathcal{S}$ through action $a_t \in \mathcal{A}$. The observation distribution $$O(o_t \mid s_t) : \mathcal{O} \times \mathcal{S} \rightarrow \left[0, 1\right]$$ gives the probability of getting observation $o_t \in \mathcal{O}$ in state $s_t \in \mathcal{S}$. Finally, $\gamma \in \left[0, 1\right]$ is a discount factor that scales the impact of future rewards.

When dealing with a POMDP in reinforcement learning, the agent cannot access the state $s_t \in \mathcal{S}$ directly and must rely on the history $\tau_t = (o_0, a_0, \ldots, o_t) \in (\mathcal{O}, \mathcal{A})^t \times \mathcal{O} \coloneqq \mathcal{T}_t$ since observations in a POMDP do not provide sufficient information as $P(o_{t+1} \mid \tau_t, a_t) \neq P(o_{t+1} \mid o_t, a_t)$.

Due to the unbounded growth of the history over time, a common approach is to encode the history into a fixed-size latent state using RNNs. This latent representation can then be used to learn a history-dependent policy with standard reinforcement learning algorithms.

\section{Computation of the VAA}\label{appendix-vaa}

The variability among attractors (VAA) quantifies the level of multistability of an RNN $f_{\theta}$ by measuring the number of different stable states reached by this model from a set of initial states $\mathcal{H}$ under a constant input $x$. It returns the normalized number of different stable states, \textit{i.e.},  a scalar between $\frac{1}{|\mathcal{H}|}$, when there is only one stable state, and $1$, when all stable states are different. Mathematically, the VAA is defined as:
\begin{equation*}
    \text{VAA}(f_{\theta}, \mathcal{H}, x) = \frac{1}{|\mathcal{H}|} \sum_{i = 1}^{|\mathcal{H}|} \frac{1}{\sum_{j=1}^{|\mathcal{H}|} \delta \left( \limsup_{n \to \infty} || f_{\theta}^n(h_i, x) - f_{\theta}^n(h_j, x \right) || = 0)},
\end{equation*}
where $\delta(\cdot)$ is the Kronecker delta function and $f^n$ is defined as:
\begin{equation*}
    f_{\theta}^n(h, x) = \begin{cases}
        f_{\theta}(f_{\theta}^{n-1}(h, x), x) & \text{if } n > 1,\\
        f_{\theta}(h, x) & \text{if } n = 1.
    \end{cases}
\end{equation*}

However, the rigorous definition of the VAA implies computing limits to infinity, making it unusable in practice. Therefore, an approximation of the VAA is used in the experiments. This approximation introduces two parameters: $M$, the number of iterations of $f$, and $\epsilon$, the tolerance to consider two states as identical. Mathematically, it gives:
\begin{equation*}
    \text{VAA}(f_{\theta}, \mathcal{H}, x; M, \epsilon) = \frac{1}{|\mathcal{H}|} \sum_{i = 1}^{|\mathcal{H}|} \frac{1}{\sum_{j=1}^{|\mathcal{H}|} \delta \left(|| f_{\theta}^M(h_i, x) - f_{\theta}^M(h_j, x \right) || \leq \epsilon)}.
\end{equation*}

When computing VAA to a specific task, the set of initial states $\mathcal{H}$ and the constant input $x$ have to be defined. Two tasks are considered in this work, the classic T-maze and its variant, the LookupTreeMaze. Therefore $\mathcal{H}$ and $x$ must be defined for both. For the classic T-maze, we simply used the hidden states obtained at the first timestep after giving the position of the reward to the agent. As only two positions are allowed, $|\mathcal{H}| = 2$. For the LookupTreeMaze, we considered the hidden states obtained by feeding all possible lookup tables to the agent. For a lookup table of size $n$, $|\mathcal{H}| = 2^n - 2$ (as the lookup table where rewards are all up or all down are never used). For both, the constant input $x$ is simply the observation given to the agent while it has not reached the junction.

\section{Training Procedure and Hyperparameters}\label{appendix-hyperparameters}

Given that the environments used in this work are partially observable Markov decision processes, the optimal policies and value functions are dependent on the sequence of all observations and actions (\textit{i.e.},  the history). We use the proximal policy optimization (PPO) \cite{schulman2017proximalpolicyoptimizationalgorithms} algorithm, an advantage actor-critic \cite{mnih2016asynchronousmethodsdeepreinforcement} method, to approximate the optimal policy and the value function. The policy and the value function ($\pi$ and VF respectively) are implemented with RNNs. Nothing is shared between the policy and the value function, so the history is encoded separately by both RNNs.

The policy neural network outputs a probability distribution over the action space from which the next action is sampled to encourage exploration at the beginning of the training and exploitation at the end as the distribution entropy decreases.

All models were trained using a modified version of the LSTM PPO implementation from CleanRL \cite{huang2022cleanrl}.

The following network was considered for the T-maze experiments shown in \cref{fig:tmaze} with the exception of the deep minGRU:
\begin{align*}
    \text{RNN}_{5} \rightarrow \text{ReLU}_{20} \rightarrow \text{ReLU}_{10} \rightarrow \text{FC}_{out},
\end{align*}
where $\text{FC}_N$ is a fully connected (or dense) layer of $N$ neurons and $\text{ReLU}_N$ is a fully connected layer of $N$ neurons followed by a ReLU (rectified linear unit activation function). $\text{RNN}_5$ is a recurrent layer of 5 neurons and is either GRU, BRC, or nBRC depending of the model. This architecture was used for the policy and the value function estimators. The size of the output is denoted $out$ and its value is 4 for the policy and 1 for the value function.

For the deep minGRU, the following architecture was used:
\begin{align*}
    \text{ReLU}_{128} \rightarrow \text{minGRU}_{128} \times 4 \rightarrow \text{Sum} \rightarrow \text{ReLU}_{64} \rightarrow \text{ReLU}_{16} \rightarrow \text{FC}_{out}, 
\end{align*}

where $\text{minGRU}_{128} \times 4$ denotes 4 minGRUs of 128 neurons in sequence, and Sum represents a skip connection as a sum between the output of $\text{ReLU}_{128}$ and the output of the last $\text{minGRU}_{128}$ followed by a ReLU.

In the LookupTreeMaze experiments shown in \cref{fig:treemaze}, the following architecture was used:
\begin{align*}
    \text{ReLU}_{128} \rightarrow \text{RNN}_{128} \rightarrow \text{Sum} \rightarrow \text{ReLU}_{64} \rightarrow \text{ReLU}_{16} \rightarrow \text{FC}_{out}, 
\end{align*}
where all layers are followed by a Dropout layer with probability $p=0.1$ except for the last layer. In the case of the hybrid BMRU+minGRU architecture, the RNN layer is a concatenation of the output of a $\text{BMRU}_{64}$ and a $\text{minGRU}_{64}$, both taking the same input.

Five different recurrent cells were considered: GRU, BRC, nBRC, minGRU, and BMRU. For each of these, 30 models were initialized with different seeds (from 1 to 30) and trained, for a total of 150 models for the T-maze experiment and 180 models for the LookupTreeMaze experiment.

The hyperparameters used for each experiment is shown in \cref{table:param}.

\begin{table}[htb]
\centering
\caption{Hyperparameters used during training for the two benchmarks.}
\begin{tabular}{|c|c|c|}
    \hline
    Hyperparameter & Benchmark 1 (T-maze) & Benchmark 2 (LookupTreeMaze)\\
    \hline
    $\pi$ epochs & $20$ & $10$\\ 
    VF epochs & $10$ & $10$\\
    $\pi$ optimizer & Adam ($lr=0.005$) & Adam ($lr=0.005$)\\
    VF optimizer & Adam ($lr=0.001$) & Adam ($lr=0.001$)\\
    LR scheduler & Cosine annealing & Cosine annealing\\
    & ($T_{max}=250$) & ($T_{max}=250$)\\
    Grad. clip norm & $1$ & $1$\\ 
    PPO $\epsilon_\text{clip}$ & $0.2$ & $0.2$\\
    VF coeff. $c_1$ & $1$ & $1$\\
    Entropy coeff. $c_2$ & $0.01$ & $0.01$\\
    GAE ($\lambda$) & $0.98$ & $0.98$\\
    Discount ($\gamma$) & $0.998$ & $0.998$\\
    Minibatch (N $\times$ size) & $2 \times 25$ & $8 \times 256$\\
    $\text{KL}_{\text{max}}$ early stop & $0.2$ & $0.2$\\
    Sample steps & $1400$ & $250$\\
    Total iterations & $250$ & $250$\\
    \hline
\end{tabular}\label{table:param}
\end{table}

\paragraph{Compute resources} 
All training experiments were performed on an academic HPC cluster, using a single NVIDIA Tesla A100 GPU (40 GB VRAM) per run on a node with AMD EPYC Milan CPUs and up to 512 GB of system RAM. The total compute footprint of the project, including preliminary experiments not reported in this paper, remains modest by modern deep learning standards.

\section{T-maze Environment}
\label{appendix-tmaze}

We base our formal definition of the T-maze on the work of \citet{lambrechts2023warmingrecurrentneuralnetworks}. The T-maze environment is a POMDP $(\mathcal{S}, \mathcal{A}, \mathcal{O}, p_0, T, R, O, \gamma)$ parameterized by the maze length $L \in \mathbb{Z}^+$.

\textbf{State space}\hspace{0.5cm} The discrete state space $\mathcal{S}$ is composed of the position of the agent $\mathcal{P}$ and the position of the goal $\mathcal{G}$, with

\[
\left\{
\begin{aligned}
    \mathcal{S} &= \mathcal{P} \times \mathcal{G}\\
    \mathcal{P} &= \left\{(0, 0), \ldots, (L-1, 0)\right\} \cup \left\{(L-1, -1), (L-1, 1)\right\}\\
    \mathcal{G} &= \left\{ -1, 1 \right\},
\end{aligned}
\right.
\]

where the -1 and 1 goal positions represent up and down respectively. Additionally, there are four terminal states given by the set $\mathcal{F} = \left\{s_t = (p_t, g_t) \in \mathcal{S} \mid p_t \in \left\{(L-1, -1), (L-1, 1)\right\} , g_t \in \mathcal{G}\right\}$.

\textbf{Action space}\hspace{0.5cm} The discrete action space is composed of the four possible directions in which the agent can move and is defined as $\mathcal{A} = \left[0, 3\right],$ where each value correspond to the right, top, left, and down directions respectively.

\textbf{Observation space}\hspace{0.5cm} The discrete observation space is composed of the initial observation of the goal at the first timestep, 0 otherwise, and a flag indicating whether the agent has reached the far right of the maze (1) or not (0). Formally, we have $$\mathcal{O} = \left\{(g_t, f_t) \mid g_{t=0} \in \mathcal{G}, g_{t >0} = 0, f_t \in \left\{1, 0\right\}\right\}.$$

\textbf{Initial state distribution}\hspace{0.5cm} The two possible initial states are $s_0^{up} = ((0, 0), -1)$ and $s_0^{down} = ((0, 0), 1)$. The initial state distribution $p_0 : \mathcal{S} \rightarrow \left[0, 1\right]$ is given by
$$
p_0(s_0) =
\begin{cases}
    0.5 & \text{if } s_0 = s_0^{up},\\
    0.5 & \text{if } s_0 = s_0^{down},\\
    0 & \text{otherwise}.
\end{cases}
$$

\textbf{Transition distribution}\hspace{0.5cm} The transition distribution function $T : \mathcal{S} \times \mathcal{A} \times \mathcal{S} \rightarrow \left[0, 1\right]$ is given by $$T(s_{t+1} \mid s_t, a_t) = \delta_{f(s_t, a_t)} (s_{t+1}),$$ where $s_{t+1} \in \mathcal{S}$, $s_t \in \mathcal{S}$ and $a_t \in \mathcal{A}$, and $f$ is given by
$$
f(s_t, a_t) =\\
\begin{cases}
    s_{t+1} = (p_t + (1, 0), g_t) & \text{if } s_t \notin \mathcal{F}, p_t + (1, 0) \in \mathcal{P}, a_t = 0,\\
    s_{t+1} = (p_t - (1, 0), g_t) & \text{if } s_t \notin \mathcal{F}, p_t - (1, 0) \in \mathcal{P}, a_t = 2,\\
    s_{t+1} = (p_t + (0, 1), g_t) & \text{if } s_t \notin \mathcal{F}, p_t + (0, 1) \in \mathcal{P}, a_t = 3,\\
    s_{t+1} = (p_t - (0, 1), g_t) & \text{if } s_t \notin \mathcal{F}, p_t - (0, 1) \in \mathcal{P}, a_t = 1,\\
    s_{t+1} = (p_t, g_t) & \text{otherwise},
\end{cases}
$$
where $s_t = (p_t, g_t) \in \mathcal{S}$ and $a_t \in \mathcal{A}$.

\textbf{Reward function}\hspace{0.5cm} The reward function $R : \mathcal{S} \times \mathcal{A} \times \mathcal{S} \rightarrow \mathbb{R}$ is given by
$$
R(s_t, a_t, s_{t+1}) =\\
\begin{cases}
    4 & \text{if } s_t \notin \mathcal{F}, s_{t+1} \in \mathcal{F}, p_{t+1} = (L-1, g_0),\\
    -0.1 & \text{if } s_t \notin \mathcal{F}, s_{t+1} \in \mathcal{F}, p_{t+1} = (L-1, r), r \in \mathcal{G}, r \neq g_0\\
    0 & \text{otherwise},
\end{cases}
$$
where $s_t = (p_t, g_t) \in \mathcal{S}$, $a_t \in \mathcal{A}$, $s_{t+1} = (p_{t+1}, g_{t+1}) \in \mathcal{S}$.

\textbf{Observation distribution}\hspace{0.5cm} The observations are deterministic. The observation distribution $O: \mathcal{O} \times \mathcal{S} \rightarrow \left[0, 1\right]$ is given by
$$
O(o_t \mid s_t) =
\begin{cases}
    1 & \text{if } o_t = (g_t, \delta(p_t = L-1)),\\
    1 & \text{if } o_t = (0, \delta(p_t = L-1))),\\
    0 & \text{otherwise},
\end{cases}
$$
where $o_t \in \mathcal{O}$ and $s_t = (p_t, g_t) \in \mathcal{S}$.

\section{LookupTreeMaze Environment}
\label{appendix-treemaze}

The LookupTreeMaze environment is a POMDP $(\mathcal{S}, \mathcal{A}, \mathcal{O}, p_0, T, R, O, \gamma)$ parameterized by the range of the number of T-mazes $N \coloneqq \left[N_{low}, N_{high}\right]$ with $0 < N_{low} \leq N_{high}$, the range of the maze lengths $L \coloneqq \left[L_{low}, L_{high}\right]$ with $0 < L_{low} \leq L_{high}$, and the size of the lookup table $\tau \in \mathbb{Z}^{>1}$.

\textbf{State space}\hspace{0.5cm} The discrete state space $\mathcal{S}$ is composed of the horizontal position of the agent in the current T-maze $\mathcal{P}$, the number of T-mazes in the current LookupTreeMaze $\mathcal{B}$, the index of the T-maze in which it is located $\mathcal{K}$, the position of the goal for the current T-maze $\mathcal{G}$, the length of the current T-maze $\mathcal{L}$, the lookup table $\mathcal{D}$, and a flag showing whether this is the first time the agent enters a new T-maze $\mathcal{H}$, with

\[
\left\{
\begin{aligned}
    \mathcal{S} &= \mathcal{P} \times \mathbf{B} \times \mathcal{K} \times \mathcal{G} \times \mathcal{L} \times \mathcal{D} \times \mathcal{H}\\
    \mathcal{P} &= \left\{0, \ldots, L_{high}-1\right\}\\
    \mathcal{B} &= \left\{N_{low}, N_{high}\right\}\\
    \mathcal{K} &= \left\{0, \ldots, N_{high}\right\}\\
    \mathcal{G} &= \left\{-1, 1\right\}\\
    \mathcal{L} &= \left\{L_{low} , \ldots, L_{high}\right\}\\
    \mathcal{D} &= \mathcal{G}^{\tau} \setminus \left\{(g, \ldots, g) \mid g \in \mathcal{G}\right\}\\
    \mathcal{H} &= \left\{1, 0\right\}
\end{aligned}
\right.
\]

where the -1 and 1 goal positions represent up and down respectively. Additionally, the terminal states are given by the set
\begin{align*}
\mathcal{F} = \{s_t = (p_t, b_t, k_t, g_t, l_t, d_t, h_t) \in \mathcal{S} \mid &~ p_t=0, k_t=b_t, h_t=0,\\ &~ p_t \in \mathcal{P}, b_t \in \mathcal{B}, k_t \in \mathcal{K}, g_t \in \mathcal{G}, l_t \in \mathcal{L}, d_t \in \mathcal{D}\}.
\end{align*}

\textbf{Action space}\hspace{0.5cm} The discrete action space is composed of the four possible directions in which the agent can move and is defined as $\mathcal{A} = \left[0, 3\right],$ where each value correspond to the right, top, left, and down directions respectively.

\textbf{Observation space}\hspace{0.5cm} The discrete observation space is composed of the lookup table at the first timestep, 0 otherwise, the index that references the direction of the next goal in the lookup table at the first timestep of each T-maze, otherwise 0, and a flag indicating whether the agent has reached the far right of the maze (1) or not (0). Formally, we have
\begin{align*}
    \mathcal{O} =& \left\{(d_t, i_t, f_t) \mid d_{t=0} \in \mathcal{D}, d_{t>0} \in \left\{0\right\}^{\tau}, i_t \in \left\{0, \ldots, \tau-1\right\}, d_{t, i_t}=g_t, h_t=1, f_t \in \left\{1, 0\right\}\right\} ~\cup\\
    & \left\{(d_t, i_t, f_t) \mid d_{t=0} \in \mathcal{D}, d_{t>0} \in \left\{0\right\}^{\tau}, i_t=0, h_t=0, f_t \in \left\{1, 0\right\}\right\},
\end{align*}
where $g_t \in \mathcal{G}$, and $h_t \in \mathcal{H}$.

\textbf{Initial state distribution}\hspace{0.5cm} The number of T-mazes is uniformly sampled in the range $N$. Thus, the probability for a number of T-mazes $n \in N$ is
$$
p(n) = \frac{1}{N_{high} - N_{low} + 1}.
$$

Similarly, the size of each T-maze is uniformly sampled in the range $L$. For each T-maze, the probability that its size is $l \in L$ is
$$
p(l) = \frac{1}{L_{high} - L_{low} + 1}.
$$

For each T-maze, the goal is uniformly sampled and the probability of goal $g \in \mathcal{G}$ is
$$
p(g) = \frac{1}{2}.
$$

Finally, the lookup table is uniformly sampled and, since there are $2^{\tau} - 2$ possible tables, the probability of table $d \in \mathcal{D}$ is
$$
p(d) = \frac{1}{2^{\tau} - 2}.
$$

\textbf{Transition distribution}\hspace{0.5cm} The transition distribution function $T : \mathcal{S} \times \mathcal{A} \times \mathcal{S} \rightarrow \left[0, 1\right]$ is given by $$T(s_{t+1} \mid s_t, a_t) = \delta_{f(s_t, a_t)} (s_{t+1}),$$ where $s_{t+1} \in \mathcal{S}$, $s_t \in \mathcal{S}$ and $a_t \in \mathcal{A}$, and $f$ is given by
$$
f(s_t, a_t) =\\
\begin{cases}
    s_{t+1} = (p_t + 1, b_t, k_t, g_t, l_t, d_t, 0) & \text{if } s_t \notin \mathcal{F}, p_t + 1 \in \mathcal{P}, a_t = 0,\\
    s_{t+1} = (p_t - 1, b_t, k_t, g_t, l_t, d_t, 0) & \text{if } s_t \notin \mathcal{F}, p_t - 1 \in \mathcal{P}, a_t = 2,\\
    s_{t+1} = (0, b_t, k_t + 1, \sim p_g(\cdot), \sim p_l(\cdot), d_t, 1) & \text{if } s_t \notin \mathcal{F}, p_t = l_t - 1, a_t \in \left\{1, 3\right\}, k_t + 1 < b_t,\\
    s_{t+1} = (0, b_t, k_t+1, g_t, l_t, d_t, 0) & \text{if } s_t \notin \mathcal{F}, p_t = l_t - 1, a_t \in \left\{1, 3\right\}, k_t + 1 = b_t,\\
    s_{t+1} = (p_t, b_t, k_t, g_t, l_t, d_t, 0) & \text{otherwise},
\end{cases}
$$
where $s_t = (p_t, b_t, k_t, g_t, l_t, d_t, h_t) \in \mathcal{S}$, $a_t \in \mathcal{A}$, and $p_g$ and $p_l$ are the probability distribution defined previously for the goal and T-maze length respectively.

\textbf{Reward function}\hspace{0.5cm} The reward function $R : \mathcal{S} \times \mathcal{A} \times \mathcal{S} \rightarrow \mathbb{R}$ is given by
$$
R(s_t, a_t, s_{t+1}) =\\
\begin{cases}
    \frac{4}{b_t} & \text{if } s_t \notin \mathcal{F}, p_t = l_t-1, a_t = 3, g_t=1,\\
    \frac{4}{b_t} & \text{if } s_t \notin \mathcal{F}, p_t = l_t-1, a_t = 1, g_t=-1,\\
    \frac{-0.1}{b_t} & \text{if } s_t \notin \mathcal{F}, p_t = l_t-1, a_t = 3, g_t=-1,\\
    \frac{-0.1}{b_t} & \text{if } s_t \notin \mathcal{F}, p_t = l_t-1, a_t = 1, g_t=1,\\
    0 & \text{otherwise},
\end{cases}
$$
where $s_t = (p_t, b_t, k_t, g_t, l_t, d_t, h_t) \in \mathcal{S}$, $a_t \in \mathcal{A}$, $s_{t+1} = (p_{t+1}, b_{t+1}, k_{t+1}, g_{t+1}, l_{t+1}, d_{t+1}, h_{t+1}) \in \mathcal{S}$.

\textbf{Observation distribution}\hspace{0.5cm} If we assume that $1$ is present $n$ times in the lookup table with $0 < n < \tau$ (and consequently $-1$ is present $\tau - n$ times in the table), the probability for index $i$ is
$$
\begin{cases}
p(i \mid g_t=1) = \frac{1}{n}\\
p(i \mid g_t=-1) = \frac{1}{\tau - n},
\end{cases}
$$
where $g_t \in \mathcal{G}$, and $d_{t,i} = g_t$ with $d_t \in \mathcal{D}$. The rest of the observation is deterministic.

\section{Equations for the different RNNs used in this work}
\label{appendix-RNNeq}

The equations for the diffent models used in this paper are provided below. In each equation, $W$ are learnable weight matrices, $b$ and $w$ are learnable parameter vectors, $x_t$ in the input, $h_t$ is the state, and $r_t$, $z_t$ and $n_t$ are gate variables. $\text{H}(\cdot)$ denotes the Heaviside step function, and $\text{S}(\cdot)$ is the sign function. 

\begin{align*}
    \intertext{\subsection{GRU \cite{cho2014learningphraserepresentationsusing}}}
    r_t &= \sigma(W_{xr} x_t + W_{hr} h_{t-1} + b_{r})\\
    z_t &= \sigma(W_{xz} x_t + W_{hz} h_{t-1} + b_{z})\\
    n_t &= \tanh(W_{xn} x_t + r_t \odot W_{hn} h_{t-1} + b_{n})\\
    h_t &= z_t \odot h_{t-1} + (1 - z_t) \odot n_t
    \intertext{\subsection{BRC \cite{vecoven2021bioinspiredbistablerecurrentcell}}}
    r_t &= 1 + \tanh(W_r x_t + w_r \odot h_{t-1} +b_r)\\
    z_t &= \sigma(W_z x_t + w_z \odot h_{t-1} + b_z)\\
    n_t &= \tanh(W_n x_t + r_t \odot h_{t-1} + b_n)\\
    h_t &= z_t \odot h_{t-1} + (1 - z_t) \odot n_t
    \intertext{\subsection{nBRC \cite{vecoven2021bioinspiredbistablerecurrentcell}}}
    r_t &= 1 + \tanh(W_{xr} x_t + W_{hr} h_{t-1}+b_r)\\
    z_t &= \sigma(W_{xz} x_t + W_{hz} h_{t-1}+b_z)\\
    n_t &= \tanh(W_n x_t + r_t \odot h_{t-1}+b_n)\\
    h_t &= z_t \odot h_{t-1} + (1 - z_t) \odot n_t
    \intertext{\subsection{minGRU \cite{feng2024weregrudminimalgatedrecurrent}} }
    z_t &= \sigma(W_z x_t + b_z)\\
    n_t &= W_n x_t + b_n\\
    h_t &= z_t \odot h_{t-1} + (1 - z_t) \odot n_t
    \intertext{\subsection{BMRU \cite{degeeter2026parallelizable}}}
    n_t &= W_n x_t + b_n\\
    \beta_t &= \left|W_\beta x_t + b_\beta\right|\\
    z_t &= \text{H}(|n_t| - \beta_t)\\
    h_t &= z_t \odot \text{S}(n_t) \odot \alpha + (1 - z_t) \odot h_{t-1}
\end{align*}

\begin{figure}[h]
    \centering
    \includegraphics[width=0.9\textwidth]{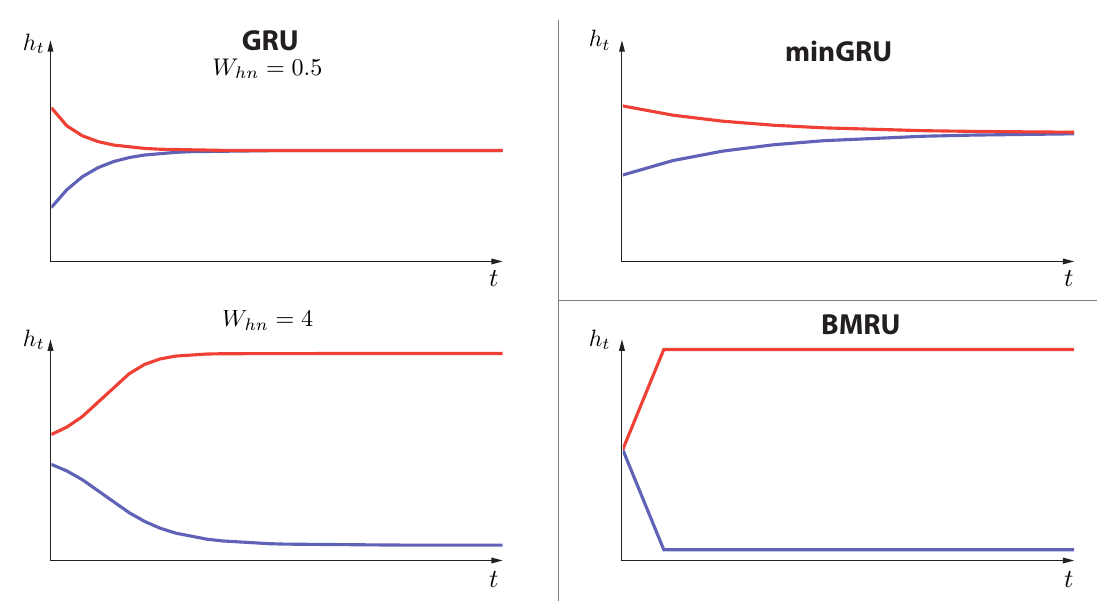}
    \caption{Left, time evolution of the internal state of a single GRU for two different initial conditions (red and blue curves) and two different values of $W_{hn}$. The top graph shows a monostable behavior: both initial conditions reach the same equilibrium. The bottom graph shows a bistable behavior: each initial condition reaches a different equilibrium. Right, time evolution of the internal state of a single minGRU (top) and BMRU (bottom) for two different initial conditions (red and blue curves). minGRU exhibits monostable behavior with transient dynamics, whereas BMRU exhibit bistable behavior with a one step convergence.}
    \label{figsup:RNN}
\end{figure}

\section{Learning curves and RNN dynamics}
\label{appendix-supFig}

\begin{figure}[h]
    \centering
    \includegraphics[width=0.9\textwidth]{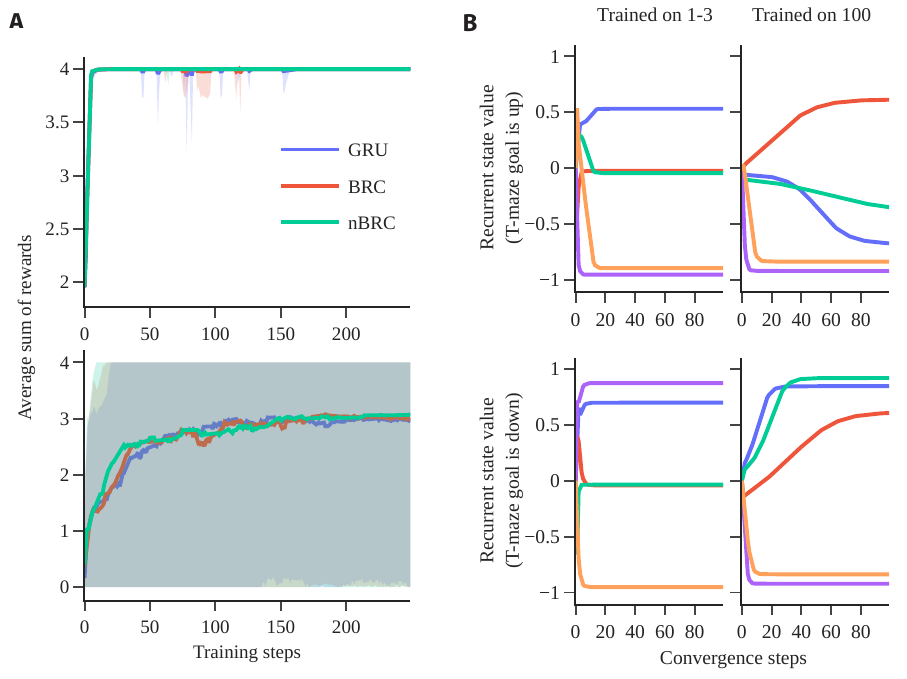}
    \caption{\textbf{A.} Evolution of the average sum of rewards (with $2\sigma$) obtained by the agents during their training on T-mazes of lengths 1-3 (top), and of length 100 (bottom). \textbf{B.} Time evolution of the internal states of two bistable models, one trained on small T-mazes (left), the other trained on long T-mazes (right), for the two possible cases on a T-maze of length 100. The figure shows that models trained on long T-mazes have slower transient dynamics before reaching equilibrium.}
    \label{figsup:LC}
\end{figure}


\end{document}